\definecolor{cobalt}{rgb}{0.0, 0.28, 0.67}
\definecolor{carnelian}{rgb}{0.7, 0.11, 0.11}
\definecolor{bole}{rgb}{0.47, 0.27, 0.23}
\newcommand{\update}[1]{\textcolor{black}{#1}}
\title{Unsupervised Learning under Latent Label Shift}
\author{%
  Manley Roberts\thanks{Equal contribution.} \quad\quad
  Pranav Mani$^*$  \quad\quad
 Saurabh Garg \quad\quad
 Zachary C. Lipton \\
  Carnegie Mellon University \\
  \texttt{\{manleyroberts,zlipton\}@cmu.edu};\ \texttt{\{pmani, sgarg2\}@cs.cmu.edu}
}
\begin{document}

\maketitle

\begin{abstract}
What sorts of structure might enable a learner to discover classes from unlabeled data? Traditional approaches rely on feature-space similarity and heroic assumptions on the data. In this paper, we introduce unsupervised learning under $\emph{Latent Label Shift}$ (LLS), where we have access to unlabeled data from multiple domains such that the label marginals $p_d(y)$  \update{can shift across domains} but the class conditionals $p(\mathbf{x}|y)$ do not. This work instantiates a new principle for identifying classes: elements that shift together group together. For finite input spaces, we establish an isomorphism between LLS and topic modeling: inputs correspond to words, domains to documents, and labels to topics. Addressing continuous data, we prove that when each label's support contains a separable region, analogous to an anchor word, oracle access to $p(d|\mathbf{x})$ suffices to identify $p_d(y)$ and $p_d(y|\mathbf{x})$ up to permutation. Thus motivated, we introduce a practical algorithm that leverages domain-discriminative models as follows: (i) push examples through domain discriminator $p(d|\mathbf{x})$; (ii) discretize the data by clustering examples in $p(d|\mathbf{x})$ space; (iii) perform non-negative matrix factorization on the discrete data; (iv) combine the recovered $p(y|d)$ with the discriminator outputs $p(d|\mathbf{x})$ to compute $p_d(y|x) \; \forall d$. With semi-synthetic experiments, we show that our algorithm can leverage domain information to improve
upon competitive unsupervised classification methods. We reveal a failure mode of standard unsupervised classification methods when data-space similarity does not indicate true groupings, and show empirically that our method better handles this case. Our results establish a deep connection between distribution shift and topic modeling, opening promising lines for future work\footnote{Code is available at \url{https://github.com/acmi-lab/Latent-Label-Shift-DDFA}.}.
\end{abstract}

\section{Introduction}
Discovering systems of categories from unlabeled data
is a fundamental but ill-posed challenge in machine learning.
Typical unsupervised learning methods group instances 
together based on feature-space similarity.
Accordingly, given a collection of photographs of animals,
a practitioner might hope that, 
in some appropriate feature space,
images of animals of the same species
should be somehow similar to each other.
But why should we expect a clustering algorithm 
to recognize that dogs viewed in sunlight
and dogs viewed at night belong to the same category?
Why should we expect that butterflies and caterpillars
should lie close together in feature space?

In this paper, we offer an alternative principle
according to which we might identify a set of classes:
we exploit distribution shift across times and locations
to reveal otherwise unrecognizable groupings among examples.
For example, if we noticed that whenever we found ourselves 
in a location where butterflies are abundant,
caterpillars were similarly abundant,
and that whenever butterflies were scarce, 
caterpillars had a similar drop in prevalence,
we might conclude that the two were tied
to the same underlying concept, no matter 
how different they appear in feature space. 
In short, our principle suggests that latent classes might be uncovered whenever 
\emph{instances that shift together group together}.

Formalizing this intuition, we introduce the problem 
of unsupervised learning under \emph{Latent Label Shift} (LLS). 
Here, we assume access to a collection of domains $d \in \{1,\dots, r\}$,
where the mixture proportions $p_d(y)$ vary across domains
but the class conditional distribution $p(x|y)$ is domain-invariant. 
Our goals are to recover the underlying classes up to permutation,
and thus to identify both the per-domain mixture proportions $p_d(y)$
and optimally adapted per-domain classifiers $p_d(y|x)$.
The essential feature of our setup is that
only the true $y$'s, as characterized 
by their class-conditional distributions $p(x|y)$,
could account for the observed shifts in $p_d(x)$.
We prove that under mild assumptions,
knowledge of this underlying structure is sufficient 
for inducing the full set of categories.

\begin{figure}[t]
  \label{fig:Pipeline}
  \centering
  \includegraphics[width=\textwidth]{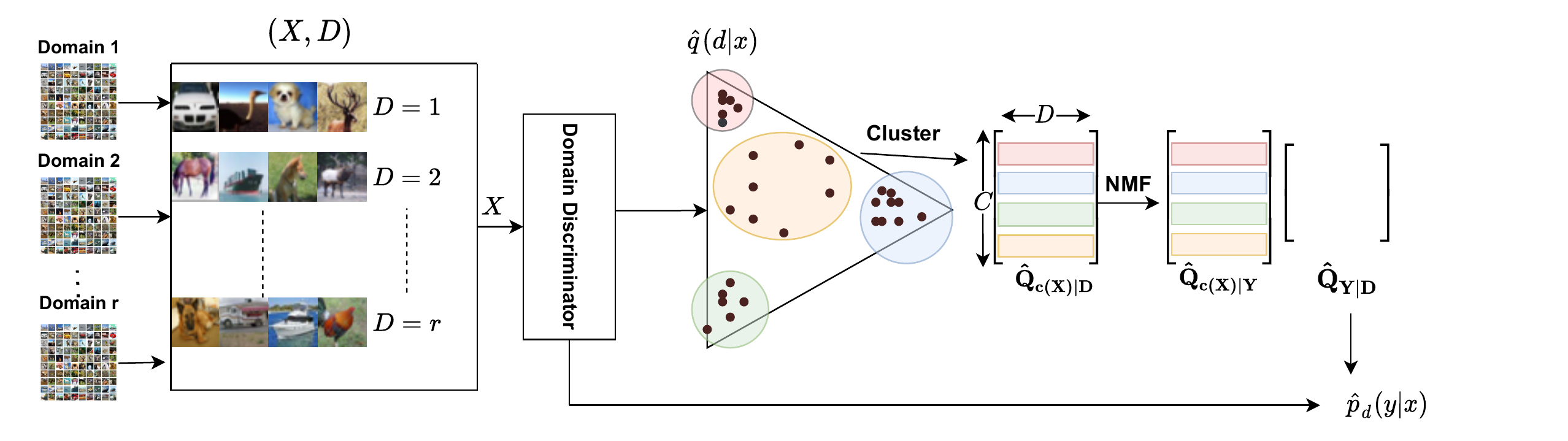}
  \caption{\textbf{Schematic of our DDFA algorithm}.
  After training a domain discriminator, 
  we (i) push all data through the discriminator;
  (ii) cluster the data based on discriminator outputs;
  (iii) solve the resulting discrete topic modeling problem and then combine $\hat{q}(d|x)$ and $\hat{q}(y, d)$ to estimate $\hat{p}_d(y|x)$.
}

\end{figure}

First, we focus on the \emph{tabular setting},
demonstrating that when the input space 
is discrete and finite,
LLS is isomorphic to topic modeling \citep{blei2003latent}. 
Here, each distinct input $x$ maps to a \emph{word}
each \emph{latent label} $y$ maps to a \emph{topic}
and each domain $d$ maps to a document. 
In this case, we can apply standard identification results
for topic modeling \citep{donoho,arora-svd,SPA,huang2016anchor,chen2021learning}
that rely only on the existence 
of anchor words within each topic
(\update{i.e.,} for each label \update{$y$} there is at least one $x$
in the support of \update{$y$},
that is not in the support of any \update{$y' \neq y$}).
Here, standard methods based on Non-negative Matrix Factorization (NMF)
can recover each domain's underlying 
mixture proportion $p_d(y)$
and optimal predictor $p_d(y|x)$~\citep{donoho, huang2016anchor, SPA}.
However, the restriction to discrete inputs,
while appropriate for topic modeling,
proves restrictive when our interests 
extend to high-dimensional continuous input spaces.

Then, to handle high-dimensional inputs, 
we propose \emph{Discriminate-Discretize-Factorize-Adjust (DDFA)},
a general framework that proceeds in the following steps:
(i) pool data from all domains to produce a mixture distribution $q(x,d)$;
(ii) train a domain discriminative model $f$ to predict $q(d|x)$;
(iii) push all data through $f$,
cluster examples in the pushforward distribution,
and tabularize the data based on cluster membership;
(iv) solve the resulting discrete topic modeling problem (e.g., via NMF), 
estimating $q(y,d)$ up to permutation
of the latent labels;
(v) combine the predicted $q(d|x)$
and $q(y,d)$ 
\update{to} estimate $p_d(y)$
and
$p_d(y|x)$.
In developing this approach, we draw inspiration 
from recent works on distribution shift 
and learning from positive and unlabeled data
that (i) leverage black box predictors 
to perform dimensionality reduction
\citep{lipton2018detecting, garg2020unified, garg2021mixture};
and (ii) work with \emph{anchor sets}, 
separable subsets of continuous input spaces 
that belong to only one class's support
\citep{scott15, Liu_2016, du_Plessis_2016, Bekker_Davis_2018, garg2021mixture}.

Our \update{key} theoretical result shows that domain discrimination \update{($q(d|x)$)}
provides a sufficient representation for identifying all parameters of interest. Given oracle access to $q(d|x)$
(which is identified without labels),
our procedure is
\update{consistent}. 
Our analysis reveals that the true $q(d|x)$
maps all points in the same anchor set 
to a single point mass in the push-forward distribution. 
This motivates our practical approach of discretizing data 
by hunting for tight clusters in \update{${q}(d|x)$} space.

In semi-synthetic experiments,
we adapt existing image classification benchmarks to the LLS setting,
sampling without replacement to construct
collections of label-shifted domains.
We note that training a domain discriminator classifier is a difficult task,
and find that warm starting the initial layers of our model
with pretrained weights from unsupervised approaches 
can significantly boost performance.
We show that warm-started DDFA outperforms
competitive unsupervised approaches on CIFAR-10 and CIFAR-20 when domain marginals $p_d(y)$ are sufficiently sparse.
In particular, we observe improvements of as much as $30\%$ accuracy
over a recent high-performing unsupervised method on CIFAR-20.
Further, on subsets of FieldGuide dataset, 
where similarity between species and diversity 
within a species leads to failure of unsupervised learning, 
we show that DDFA recovers the true distinctions.
To be clear, these are not apples-to-apples comparisons:
our methods are specifically tailored to the LLS setting.
The takeaway is that the structure of the \emph{LLS} setting
can be exploited to outperform the best unsupervised learning heuristics.

\section{Related Work}

\textbf{Unsupervised Learning {} {}}
Standard unsupervised learning approaches for discovering labels 
often rely on similarity in the original data space \citep{kmeans,gmm}.
While distances in feature space become
meaningless for high-dimensional data,
deep learning researchers have turned 
to similarity in a representation space
learned via self-supervised contrastive tasks 
\citep{jigsaw,contrastive-context, contrastive-rotations,chen2020simple},
or similarity in a feature space learned end-to-end for a clustering task
\citep{deepclustering, deepercluster, RUC, SCAN}.
Our problem setup closely resembles independent component analysis (ICA), 
where one seeks to identify statistically independent signal components from mixtures \citep{ICA}. 
However, ICA's assumption of statistical independence among the components does not generally hold in our setup.
In topic modeling \citep{blei2003latent,arora-svd,huang2016anchor,chen2021learning,papa},
documents are modeled as mixtures of topics, and topics 
as categorical distributions over a finite vocabulary.
\update{Early topic models include the well-known Latent Dirichlet Allocation (LDA) \citep{blei2003latent}, which assumes that topic mixing coefficients are drawn from a Dirichlet distribution,
along with papers with more relaxed assumptions 
on the distribution of topic mixing coefficients (e.g., pLSI) \cite{pLSI,papa}}. 
The topic modeling literature often draws on
Non-negative Matrix Factorization (NMF) methods 
\citep{paatero1994positive,NMFAlgos},
which decompose a given matrix into a product 
of two matrices with non-negative elements
\citep{pLSI-equals-NMF-ding,de2016equivalence, plsi-equals-nmf-gaussier,girolami2003equivalence}. 
In both Topic Modeling and NMF, 
a fundamental problem has been to characterize the precise conditions 
under which the system is uniquely identifiable 
\citep{donoho,arora-svd,huang2016anchor,chen2021learning}.
The anchor condition (also referred to as separability)
is known to be instrumental for identifying topic models
\citep{arora-svd,chen2021learning,huang2016anchor,donoho}.
In this work, we extend these ideas, leveraging separable subsets 
of each label's support (the anchor sets) to produce 
anchor words in the discretized problem. 
Existing methods have attempted to extend latent variable modeling
to continuous input domains by making assumptions 
about the functional forms of the class-conditional densities, 
e.g., restricting to Gaussian mixtures \citep{gmm,gmmLDA}.
A second line of approach involves finding an appropriate discretization of the continuous space \citep{ideafordisc}.

\textbf{Distribution Shift under
the Label Shift Assumption {} {}}
The label shift assumption, 
where $p_d(y)$ can vary but $p(x|y)$ cannot, has been extensively studied 
in the domain adaptation literature
\citep{saerens2002adjusting,storkey2009training,zhang2013domain,lipton2018detecting,guo2020ltf,garg2020unified}
and also \update{features} in the problem of learning 
from positive and unlabeled data 
\citep{elkan2008learning, pusurvey, garg2021mixture}.
For both problems, many classical approaches 
suffer from the curse of dimensionality,
failing in the settings where deep learning prevails. 
Our solution strategy draws inspiration from recent work
on label shift \citep{lipton2018detecting, alexandari2019adapting, azizzadenesheli2019regularized,  garg2020unified} 
and PU learning \citep{pusurvey,Liu_2016,scott15, garg2021mixture, garg2022OSLS} that leverage black-box predictors to produce 
sufficient low-dimensional representations 
for identifying target distributions of interest
(other works leverage black box predictors 
heuristically \citep{ivanov2019dedpul}).
\textbf{Key differences:}
While PU learning requires identifying
\emph{one} new class for which we lack labeled examples
provided that the positive class contains an anchor set \citep{garg2021mixture},
LLS can identify an arbitrary number of classes (up to permutation)
from completely unlabeled data, provided a sufficient number of domains.

\textbf{Domain Generalization {} {}}
The related problem of Domain Generalization (DG)
also addresses learning with data 
drawn from multiple distributions 
and where the domain identifiers 
play a key role~\citep{muandet2013domain, arjovsky2019invariant}.
However in DG, we are given 
\emph{labeled} data from multiple domains, 
and our goal is to learn a classifier 
that can generalize to new domains. 
By contrast, in LLS, we work with unlabeled data only,
leveraging the problem structure 
to identify the underlying labels.

\section{Latent Label Shift Setup}

\textbf{Notation {} {}} For a vector $v\in \Real^p$, 
we use $v_j$ to denote its $j^\text{th}$ entry, 
and for an event $E$, we let $\indict{E}$ 
denote the binary indicator of the event.
By $\abs{A}$, we denote the cardinality of set $A$. 
With $[n]$, we denote the set $\{1,2,\ldots, n\}$. 
We use $[A]_{i,j}$ to access the element at 
$(i,j)$ in $A$.
Let $\inpt$ be the input space 
and $\out = \{ 1,2, \ldots, k\}$ \update{be} the output space
for multiclass classification. 
\update{We assume throughout this work that the number of true classes $k$ is known.}
Throughout this paper, we use capital letters 
to denote random variables and small case letters 
to denote the corresponding values they take. 
For example, by $X$ we denote the input random variable 
and by $x$, we denote a value that $X$ may take.

We now formally introduce the problem of 
unsupervised learning under LLS.
In LLS, we assume that we observe 
unlabeled data from $r$ domains.
Let $\calR = \{1, 2, \ldots, r\}$ 
be the set of domains. 
By $p_d$, we denote the probability density
(or mass) function for each domain $d\in \calR$.

\begin{definition}[Latent label shift] \label{def:latent_label_shift}
We observe data from $r$ domains. 
While the label distribution 
\update{can differ across the domains}, 
for all $d, d' \in \calR$
and for all $(x, y) \in \inpt \times \out$, 
we have $p_d(x| y) = p_{d'}(x|y) $.
\end{definition}

Simply put, \defref{def:latent_label_shift} states 
that the conditional distribution $p_d(x|y)$
remains invariant across domains,
i.e., they satisfy the label shift assumption. 
Thus, we can drop the subscript on this factor,
denoting all $p_d(x|y)$ by $p(x|y)$. 
Crucially, under LLS, $p_d(y)$ 
can vary across different domains. 
Under LLS, we observe unlabeled data 
with domain label 
$\{(x_1, d_1), (x_2, d_2), \ldots, (x_n, d_n)\}$. 
Our goal breaks down into two tasks.
\update{Up to} permutation of labels, we aim to 
(i) estimate the label marginal in each domain $p_d(y)$; 
and (ii) estimate the optimal per-domain predictor $p_d(y|x)$.

\begin{figure}[t!]
    \centering
    \begin{tikzpicture}
    
    \node[obs]      (D)                              {D};
    \node[latent]        (Y)       [right=of D] {Y};
    \node[obs]      (X)       [right=of Y] {X};
    
    \edge [] {D} {Y}
    \edge [] {Y} {X}
    \end{tikzpicture}    
    \caption{Relationship under $Q$ between observed $D$, observed $X$, and latent $Y$.}
    \label{fig:pgm_dyx}
\end{figure}
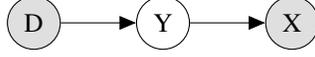

\textbf{Mixing distribution Q{} {}}
A key step in our algorithm will be 
to train a domain discriminative model.
Towards this end we define $Q$,
a distribution over $\calX\times\calY\times\calR$,
constructed by taking a uniform mixture 
over all domains.
By $q$, we denote the probability density
(or mass) function of $Q$.
Define $Q$ such that 
$q(x, y | D = d) = p_d(x,y)$, 
i.e., when we condition on $D=d$ 
we recover the joint distribution 
over $\calX \times \calY$ 
specific to that domain $d$.
For all $d \in \calR$, 
we define $\gamma_d = q(d)$, 
i.e., the prevalence of each domain
in our distribution $Q$.  
Notice that $q(x,y)$ is a mixture over 
the distributions $\{ p_d (x,y) \}_{d\in \calR}$, 
with $\{ \gamma_d \}_{d \in \calR}$ 
as the corresponding mixture coefficients.
Under LLS (\defref{def:latent_label_shift}), 
$X$ does not depend on $D$ when conditioned on $Y$
(\figref{fig:pgm_dyx}).

\textbf{Additional notation for the discrete case {} {}}
To begin, we setup notation for 
discrete input spaces with $\abs{\inpt} = m$.
Without loss of generality, 
we assume that $\calX = \{1, 2, \ldots, m\}$.
The label shift assumption allows us 
to formulate the label marginal 
estimation problem in matrix form. 
Let $\bQ_{X|D}$ be an $m \times r$ matrix
such that 
$[\bQ_{X|D}]_{i,d} = p_d(X = i)$, i.e., 
the $d$-{th} column of $\bQ_{X|D}$ is $p_d(x)$.
Let $\bQ_{X|Y}$ be an 
$m \times k$ matrix such 
that $[\bQ_{X|Y}]_{i,j} = p(X=i|Y=j)$, 
the $j$-th column is a distribution over $X$ given $Y=j$.
Similarly, define $\bQ_{Y|D}$ as a $k\times r$ matrix 
whose $d$-th column is the domain marginal $p_d(y)$. 
Now with \defref{def:latent_label_shift}, we have 
$p_d(x) = \sum_{y} p_d(x,y) = \sum_{y} p_d(x|y)p_d(y) = \sum_{y} p(x|y)p_d(y) $. 
Since this is true $\forall d \in \calR$, 
we can express this in a matrix form as 
$\bQ_{X|D} = \bQ_{X|Y}\bQ_{Y|D}$. 

\textbf{Additional assumptions {} {}} 
Before we present identifiability results for the LLS problem, 
we introduce four additional assumptions required throughout the paper: 

\begin{enumerate}[label=A.\arabic*]
    \item There are at least as many domains as classes, i.e., $\abs{\calR} \ge \abs{\out}$. 
    \item The matrix formed by label marginals (as columns) across different domains is full-rank, i.e., $\textrm{rank}({\bQ}_{Y|D}) = k$.
    \item Equal representation of domains, i.e., 
    for all $ d \in \calR, \gamma_d = \nicefrac{1}{r}$.
    \item Fix $\epsilon > 0$. For all $y \in \out$, 
    there exists a \update{unique} subdomain $A_y \subseteq \inpt$, 
    such that $q(A_y) \ge \epsilon$ \update{ and $x \in A_y$ if and only if the following conditions are satisfied: $q(x|y) > 0$ and $q(x|y') = 0 $ for all $y^\prime \in \out \setminus \{y\}$}.
    We refer to this assumption as the $\epsilon$-anchor sub-domain condition. 

\end{enumerate}

We now comment on the assumptions. 
A.1--A.2 are benign, these assumptions just imply 
that the matrix \update{${\bf Q}_{Y|D}$} is full row rank. 
Without loss of generality, A.3 can be assumed 
when dealing with data from a collection of domains. 
When this condition is not satisfied, 
one could just re-sample data points 
uniformly at random from each domain $d$. 
Intuitively, A.4 states that for each label $y \in \out$,
we have some subset of inputs that only belong to that class $y$. 
To avoid vanishing probability of this subset, 
we ensure at least $\epsilon$ probability mass 
in our mixing distribution $Q$. 
The anchor word  condition is related 
to the positive sub-domain in PU learning, 
which requires that there exists a subset of $\mathcal{X}$ 
in which all examples only belong 
to the positive class~\citep{scott15, Liu_2016, du_Plessis_2016, Bekker_Davis_2018}.
 \label{sec:setup}

\section{Theoretical Analysis}

In this section, we establish identifiability of LLS problem. 
We begin by considering the case where the input space 
is discrete and formalize the isomorphism to topic modeling.
Then we establish the identifiability 
of the system in this discrete setting 
by appealing to existing results 
in topic modeling~\citep{huang2016anchor}. 
Finally, extending results from discrete case, 
we provide novel analysis to establish
our identifiability result for the continuous setting. 

\textbf{Isomorphism to topic modeling {} {}}
Recall that for the discrete input setting, 
we have the matrix formulation: 
$\bQ_{X|D} = \bQ_{X|Y}\bQ_{Y|D}$. 
Consider a corpus of $r$ documents, 
consisting of terms from a vocabulary of size $m$.
Let $\doc$ be an $\mathbb{R}^{m \times r}$ matrix 
representing the underlying corpus. 
Each column of $\doc$ represents a document,
and each row represents a term in the vocabulary. 
Each element $[\doc]_{i,j}$ represents 
the frequency of term $i$ in document $j$. 
Topic modeling~\citep{blei2003latent,pLSI,huang2016anchor,arora-svd} 
considers each document to be composed as a mixture of $k$ topics.
Each topic prescribes a frequency with which
the terms in the vocabulary occur given that topic. 
Further, the proportion of each topic varies 
across documents with the frequency of terms 
given topic remaining invariant.

We can state the topic \update{modeling} problem as:
$\doc = \termtopic \topictopic$,
where $\termtopic$ is an $\mathbb{R}^{m \times k }$ matrix,
$[\termtopic]_{i,j}$ represents the frequency of term $i$ given topic $j$,
and $\topictopic$ is an $\mathbb{R}^{k \times r}$ matrix,
where $[\topictopic]_{i,j}$ represents 
the proportion of topic $i$ in document $j$. 
Note that all three matrices are column normalized. 
The isomorphism is then between document and domain,
topic and label, term and input sample, i.e., 
$\doc = \termtopic\topictopic \equiv \bQ_{X|D} = \bQ_{X|Y}\bQ_{Y|D}$. 
In both the cases, we are interested in decomposing 
a known matrix into two unknown matrices. 
This formulation is examined as a non-negative matrix factorization problem
with an added simplicial constraint on the columns (columns sum to 1) \citep{arora-nmf-provably,SPA}. 

Identifiability of the topic modeling problem is well-established \citep{donoho,arora-svd,SPA,huang2016anchor,chen2021learning}.
We leverage the isomorphism to topic modeling 
to extend this identifiability condition to our LLS setting.
We formalize the \update{adaptation} here:

\begin{theorem}
\label{thm:separability-discrete}
(adapted from Proposition 1 in \citet{huang2016anchor})
Assume A.1, A.2 and A.4 hold 
(A.4 in the discrete setting is referred to as the anchor word condition).
Then the solution to $\bQ_{X|D} = \bQ_{X|Y}\bQ_{Y|D}$ is uniquely identified \update{up to permutation of class labels}. 
\end{theorem}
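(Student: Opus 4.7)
The plan is to reduce the claim to the classical anchor-word identification argument by using the isomorphism already stated in the excerpt: $\doc \equiv \bQ_{X|D}$, $\termtopic \equiv \bQ_{X|Y}$, and $\topictopic \equiv \bQ_{Y|D}$. Under this identification, A.1 gives at least as many ``documents'' as ``topics''; A.2 gives full row rank of the topic--document matrix; and A.4 specialized to a finite $\inpt$ is exactly the anchor-word condition: for every $y \in \out$ there is some $i \in \inpt$ with $[\bQ_{X|Y}]_{i,y} > 0$ and $[\bQ_{X|Y}]_{i,y'} = 0$ for all $y' \neq y$. All three matrices are column-stochastic by construction, so the simplicial constraint used by the classical argument is automatic, and I just need to verify that the cited Proposition~1 of \citet{huang2016anchor} goes through in this notation.

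With the correspondences in hand, I would execute the canonical proof. First, row-normalize $\bQ_{X|D}$, replacing row $i$ by $\widetilde{R}_i := [\bQ_{X|D}]_{i,\cdot}/\sum_d [\bQ_{X|D}]_{i,d}$ (rows with zero total mass can be discarded since they correspond to inputs with $q(x=i)=0$). For an anchor word $i$ of class $y$, the expansion $p_d(x=i) = \sum_{y'} p(x=i\mid y')\,p_d(y')$ collapses to $p(x=i\mid y)\,p_d(y)$, so after normalization $\widetilde{R}_i$ coincides with $\widetilde{T}_y$, the normalized $y$-th row of $\bQ_{Y|D}$. For any non-anchor $i$, the same expansion yields a convex combination $\widetilde{R}_i = \sum_y \lambda_{i,y}\widetilde{T}_y$ with at least two strictly positive coefficients. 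Then invoke A.2: full row rank of $\bQ_{Y|D}$ implies that $\{\widetilde{T}_y\}_{y\in\out}$ are affinely independent, hence exactly the extreme points of the convex hull of $\{\widetilde{R}_i\}_{i\in\inpt}$. Any factorization consistent with $\bQ_{X|D}$ must produce the same convex hull, so it must recover the same unordered set $\{\widetilde{T}_y\}$.

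Once $\{\widetilde{T}_y\}$ is recovered up to an ordering, the normalization constants are pinned down by imposing column-stochasticity on $\bQ_{Y|D}$, which determines $\bQ_{Y|D}$ uniquely up to a relabeling of $\out$. Then $\bQ_{X|Y}$ is the unique solution to the consistent linear system $\bQ_{X|Y}\bQ_{Y|D} = \bQ_{X|D}$, where a right inverse of $\bQ_{Y|D}$ exists by A.2. The remaining ambiguity is precisely a simultaneous permutation of the columns of $\bQ_{X|Y}$ and the rows of $\bQ_{Y|D}$, matching the statement. The main technical obstacle is the extreme-point step: one must rule out that some non-anchor row $\widetilde{R}_i$ coincides with an anchor row $\widetilde{T}_y$. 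This is where A.2 is genuinely needed---affine independence of $\{\widetilde{T}_y\}$ forces the coefficient vector $(\lambda_{i,y})_y$ to be unique, so such a coincidence would require a non-anchor $i$ to concentrate all its weight on a single class, contradicting the definition of a non-anchor. Note that A.3 plays no role here; it becomes relevant only when mapping between the mixing distribution $Q$ and per-domain quantities.
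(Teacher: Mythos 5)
The paper itself does not prove this theorem---it defers entirely to Proposition~1 of \citet{huang2016anchor}---so you are supplying an argument the authors omitted. Your reduction to the topic-modeling notation, the row-normalization, the identification of anchor rows with normalized rows of $\bQ_{Y|D}$, and the observation that A.4's definition forces any word supported on a single class to be an anchor (so non-anchor rows genuinely have two positive coefficients) are all correct. But there is a genuine gap at the decisive step: the claim that ``any factorization consistent with $\bQ_{X|D}$ must produce the same convex hull.'' A competing factorization $\bQ_{X|D}=\bQ'_{X|Y}\bQ'_{Y|D}$ only gives $\widetilde{R}_i\in\mathrm{conv}\{\widetilde{T}'_y\}$ for every $i$, i.e.\ the competitor's simplex \emph{contains} the data hull; nothing in your argument rules out strict containment. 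This is not a technicality: with $\bQ_{X|Y}=\bigl(\begin{smallmatrix}1/2&0\\0&1/2\\1/2&1/2\end{smallmatrix}\bigr)$ (separable) and $\bQ_{Y|D}=\bigl(\begin{smallmatrix}0.9&0.8\\0.1&0.2\end{smallmatrix}\bigr)$ (full rank), the matrix $M=\bigl(\begin{smallmatrix}1.1&0\\-0.1&1\end{smallmatrix}\bigr)$ yields a second nonnegative, column-stochastic factorization $(\bQ_{X|Y}M^{-1})(M\bQ_{Y|D})$ that is not a permutation of the first and whose simplex strictly contains the data hull. Separability of the \emph{ground-truth} first factor plus full rank of the second is therefore not enough to force uniqueness over all nonnegative factorizations of inner dimension $k$.

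The fix, which is also how the cited result should be read, is that uniqueness holds \emph{within the model class cut out by the assumptions}: the competing factorization must itself satisfy A.4 (and A.2). Then each class $y$ of the competitor has an anchor word $i$, so $\widetilde{T}'_y=\widetilde{R}_i$ lies in the data hull; combined with $\mathrm{conv}\{\widetilde{T}_y\}=\mathrm{conv}\{\widetilde{R}_i\}\subseteq\mathrm{conv}\{\widetilde{T}'_y\}$ and the affine independence you already established, the two vertex sets coincide, and the rest of your argument (pinning normalizations by column-stochasticity, solving for $\bQ_{X|Y}$ via a right inverse of $\bQ_{Y|D}$) goes through. Separately, the obstacle you do flag---a non-anchor row coinciding with a vertex---is actually already excluded by A.4's ``if and only if'' definition of $A_y$, so your worry there is misplaced while the real one goes unaddressed.
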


We refer readers to \citet{huang2016anchor} for a proof of this theorem. 
Intuitively, \thmref{thm:separability-discrete} states 
that if each label $y$ has at least one token in the input space 
that has support only in $y$, and A.1, A.2 hold, 
then the solution to $ \bQ_{X|Y}$, $\bQ_{Y|D}$ is unique. 
Furthermore, under this condition, there exist algorithms 
that can recover $\bQ_{X|Y}$, $\bQ_{Y|D}$
within some permutation \update{of class labels} \citep{huang2016anchor,SPA,arora-nmf-provably,arora-svd}. 

\textbf{Extensions to the continuous case {} {}} 
We will prove identifiability in the continuous setting, 
when $\mathcal{X} = \mathbb{R}^p$ for some $p \ge 1$. 
In addition to A.1--A.4, we make an additional assumption 
that we have oracle access to $q(d|x)$, i.e., 
the true domain discriminator for mixture distribution $Q$. 
This is implied by assuming access 
to the marginal $q(x,d)$ 
from which we observe our samples.
Formally, we define a push forward function $f$ 
such that $[f(x)]_d = q(d|x)$, then push the data forward through  $f$ to obtain outputs in $\Delta^{r-1}$. In the proof of \thmref{thm:anchor-subdomain-continuous}, we will show that these outputs can be discretized in a fashion that maps anchor subdomains to anchor words in a tabular, discrete setting.
We separately remark that the anchor word outputs are in fact extreme corners of the convex polytope in $\Delta^{r-1}$ which encloses all $f(x)$ mass; we discuss this geometry further in \appref{sec:appendix-convex-polytope-geometry}.
After constructing the anchor word discretization,
we appeal to \thmref{thm:separability-discrete}
to recover $\bQ_{Y|D}$. 
Given  $\bQ_{Y|D}$, 
we show that we can use Bayes' rule and the 
LLS condition (\defref{def:latent_label_shift})
to identify the distribution $q(y|x,d) = p_d(y|x)$
over latent variable $y$.
We formalize this in the following theorem:

\begin{theorem}
\label{thm:anchor-subdomain-continuous}
Let the distribution $Q$ over random variables
$X,Y,D$ satisfy Assumptions A.1--A.4. 
Assuming access to the joint distribution $q(x, d)$, \update{and knowledge of the number of true classes $k$}, 
we show that the following quantities are identifiable: 
(i) $\bQ_{Y|D}$, (ii) $q(y|X = x)\,$,
for all  $x \in \inpt$ that lies 
in the support (i.e. $q(x) > 0$); 
and (iii) $q(y|X = x,D = d)\,$, 
for all $x \in \inpt$ and $d \in \calR$
such that  $q(x,d) > 0$. 
\end{theorem}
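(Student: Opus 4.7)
The plan is to reduce the continuous identifiability problem to the discrete topic-modeling result of \thmref{thm:separability-discrete} via a carefully chosen discretization of $q(d\mid x)$, and then use Bayes' rule to unwind the remaining conditionals. The first move is to exploit the conditional independence $X \perp D \mid Y$ implied by LLS (see \figref{fig:pgm_dyx}): writing $q(d\mid x) = \sum_{y\in\out} q(d\mid y)\, q(y\mid x)$, the push-forward $f(x) = (q(d\mid x))_{d\in\calR} \in \Delta^{r-1}$ factors as $f(x) = \bQ_{D|Y}\, q(\cdot \mid x)$, where $\bQ_{D|Y}$ is the $r\times k$ matrix with entries $q(d\mid y)$. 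Under A.1--A.3, $\bQ_{D|Y}$ is a positive diagonal rescaling of $\bQ_{Y|D}^\top$ and hence inherits full column rank $k$; each $f(x)$ is therefore a \emph{unique} convex combination of the columns $v_y := q(\cdot\mid y) \in \Delta^{r-1}$.

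Next, I would analyze $f$ on the anchor subdomains. For $x \in A_y$, the anchor condition A.4 forces $q(y\mid x) = 1$ and thus $f(x) = v_y$; uniqueness of the convex representation then yields the converse, namely $f^{-1}(v_y) = A_y$ up to $q$-null sets. Consequently the push-forward measure $f_\sharp q$ carries an atom of mass at least $\epsilon$ at each of the $k$ extremal points $v_y$. I would then discretize $\Delta^{r-1}$ into $k+1$ bins: one singleton per atom $v_y$ (identified as the $k$ extremal atoms of $f_\sharp q$ of mass $\geq \epsilon$) plus a single ``other'' bin for all remaining probability. In the resulting discrete distribution, the singleton at $v_y$ receives mass only from $A_y$ and thus plays the role of an anchor word for class $y$; A.1, A.2, and A.4 transfer verbatim to the tabularized problem, so \thmref{thm:separability-discrete} identifies $\bQ_{Y|D}$ up to permutation of class labels, establishing claim (i).

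Finally, for (ii) and (iii) I would convert the matrix $\bQ_{Y|D}$ into the desired per-point conditionals. A.3 gives $q(y) = \tfrac{1}{r}\sum_d [\bQ_{Y|D}]_{y,d}$; combining with $\gamma_d = 1/r$ through Bayes' rule recovers all of $\bQ_{D|Y}$, which, having full column rank, admits a left inverse. Applying that left inverse to $f(x)$ returns $q(y\mid x)$ whenever $q(x) > 0$, settling (ii). For (iii), invoking $X \perp D \mid Y$ once more yields $p_d(y\mid x) = q(y\mid x,d) = q(y\mid x)\, q(d\mid y)/q(d\mid x)$, whose right-hand side is identified for every $(x,d)$ with $q(x,d) > 0$. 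I expect the main obstacle to be the geometric step in the second paragraph---arguing that $f^{-1}(v_y)$ carries no non-anchor mass, so that the singleton bins act as true anchor words. This reduces to showing that each $v_y$ is a genuine extreme point of $\mathrm{Conv}(\{v_{y'}\}_{y'\in\out})$, which I would verify directly from the full column rank of $\bQ_{D|Y}$.
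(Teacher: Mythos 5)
Your proposal is correct and follows essentially the same route as the paper: push the data through the oracle $f(x)=q(d\mid x)=\bQ_{D|Y}\,q(\cdot\mid x)$, show via full column rank of $\bQ_{D|Y}$ that each anchor subdomain $A_y$ (and nothing else) collapses to a single atom of mass at least $\epsilon$, discretize so those atoms become anchor words, invoke \thmref{thm:separability-discrete} for (i), and then recover (ii) by inverting $\bQ_{D|Y}$ and (iii) by Bayes with $q(d\mid x,y)=q(d\mid y)$ --- exactly the content of \lemref{lemma:dd_maps_to_unique}, \lemref{lemma:solve-domain-agnostic}, and \lemref{lemma:solve-domain-informed}. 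The only (immaterial) divergence is in the binning: you pre-select the $k$ anchor atoms as the extreme points of the convex hull of the pushforward support (the geometric view the paper defers to \appref{sec:appendix-convex-polytope-geometry}), whereas the paper's proof sidesteps that identification entirely by giving \emph{every} atom of mass at least $\epsilon$ (at most $\mathcal{O}(\nicefrac{1}{\epsilon})$ of them) its own bin and letting the discrete anchor-word identifiability sort out which bins are anchors.
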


Before presenting a proof sketch for \thmref{thm:anchor-subdomain-continuous}, 
we first present key lemmas 
(we include their proofs in \appref{sec:lemma-proofs}).

\begin{lemma}
\label{lemma:solve-domain-agnostic}
Under the same assumptions as \thmref{thm:anchor-subdomain-continuous},
the matrix $\bQ_{Y|D}$ and $f(x) = q(d|x)$ 
uniquely determine $q(y|x)$ for all $y \in \out$ 
and $x \in \mathcal{X}$ such that $q(x) > 0$.
\end{lemma}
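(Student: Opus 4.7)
The plan is to exploit the conditional independence $X \perp D \mid Y$ implied by LLS (as shown in \figref{fig:pgm_dyx}) to express $q(d|x)$ as a linear function of $q(y|x)$. Identifiability then reduces to showing the coefficient matrix has full column rank.

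First, for any $x$ with $q(x) > 0$ and any $d \in \calR$, I would apply the law of total probability and the conditional independence:
$$q(d|x) \;=\; \sum_{y \in \out} q(d,y|x) \;=\; \sum_{y \in \out} q(d|y,x)\, q(y|x) \;=\; \sum_{y \in \out} q(d|y)\, q(y|x).$$
Stacking over all $d$ gives a linear system $f(x) = \bQ_{D|Y}\, \mathbf{q}(y|x)$, where $\bQ_{D|Y} \in \Real^{r\times k}$ has entries $[\bQ_{D|Y}]_{d,y} = q(d|y)$ and $\mathbf{q}(y|x) \in \Real^k$ is the vector of posteriors $(q(y|x))_{y \in \out}$. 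Using Bayes' rule together with A.3 ($\gamma_d = 1/r$), each entry simplifies to $q(d|y) = p_d(y)/\sum_{d'} p_{d'}(y)$, so $\bQ_{D|Y}$ is directly computable from the given $\bQ_{Y|D}$.

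The crux is to show $\bQ_{D|Y}$ has full column rank $k$. The $y$-th column of $\bQ_{D|Y}$ is exactly the $y$-th row of $\bQ_{Y|D}$ rescaled by the positive constant $1/(r\, q(y))$; that $q(y) > 0$ for every $y$ follows from A.2, since $q(y) = \tfrac{1}{r}\sum_d p_d(y)$ and a vanishing $q(y)$ would force a zero row in $\bQ_{Y|D}$, contradicting $\mathrm{rank}(\bQ_{Y|D}) = k$. Linear independence of the rows of $\bQ_{Y|D}$ is preserved under strictly positive rescaling, and A.1 ($r \ge k$) ensures the matrix is tall enough, so $\bQ_{D|Y}$ has full column rank. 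Hence $f(x) = \bQ_{D|Y}\, \mathbf{q}(y|x)$ is a consistent linear system with full column rank, and $\mathbf{q}(y|x)$ is uniquely determined by $\bQ_{Y|D}$ and $f(x)$.

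The main obstacle is the rank transfer between $\bQ_{Y|D}$ and $\bQ_{D|Y}$; everything else is bookkeeping from the graphical structure. That subtlety is handled by A.2 and A.3 jointly guaranteeing the Bayes-rule scaling factors are strictly positive, and hence preserve rank.
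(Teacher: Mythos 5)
Your proof is correct and follows essentially the same route as the paper's: the same decomposition $q(d|x)=\sum_y q(d|y)\,q(y|x)$ via the conditional independence $X\perp D\mid Y$ (the paper's Lemma~\ref{lemma:q_d_x_y}), the same reduction to the linear system $f(x)=\bQ_{D|Y}\,g(x)$, and the same rank argument showing the columns of $\bQ_{D|Y}$ are positively rescaled rows of $\bQ_{Y|D}$ with $q(y)>0$ guaranteed by A.2--A.3 (the paper's Lemmas~\ref{lemma:each_y_nonzero_weight} and~\ref{lemma:P_d_y-linearly-independent}). No gaps.
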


\lemref{lemma:solve-domain-agnostic} states 
that given matrix $\bQ_{Y|D}$ and oracle domain discriminator, 
we can uniquely identify $q(y|x)$. 
In particular, we show that for any $x \in \calX$,
$q(d|x)$ can be expressed as a convex combination 
of the $k$ columns of $\bQ_{D|Y}$ (which is computed from $\bQ_{Y|D}$ and is column rank $k$)
and the coefficients of the combination are $q(y|x)$.
Combining this with the linear independence
of the columns of $\bQ_{D|Y}$, 
we show that these coefficients are unique. 
In the following lemma, we show how the identified $q(y|x)$ 
can then be used to identify $q(y|x,d)$: 

\begin{lemma}
\label{lemma:solve-domain-informed}
Under the same assumptions as \thmref{thm:anchor-subdomain-continuous},
for all $y \in \out$, and $x \in \mathcal{X}$ such that $q(x,d) > 0$.
the matrix $\bQ_{Y|D}\,$ and $q(y|x)$ 
uniquely determine $q(y|x,d)$.
\end{lemma}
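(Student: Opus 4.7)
The plan is to derive an explicit formula for $q(y \mid x, d)$ in terms of the two inputs $\bQ_{Y|D}$ and $q(y \mid x)$ (plus the uniform mixing weights from A.3), using Bayes' rule together with the key conditional-independence consequence of the LLS assumption, namely that $X \perp D \mid Y$ under $Q$ (\figref{fig:pgm_dyx}).

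\textbf{Step 1: Bayes' rule and conditional independence.} First I would expand
\[
q(y \mid x, d) \;=\; \frac{q(y, x, d)}{q(x,d)} \;=\; \frac{q(x \mid y, d)\, q(y \mid d)\, q(d)}{q(x, d)}.
\]
The LLS assumption gives $q(x \mid y, d) = p_d(x \mid y) = p(x \mid y) = q(x \mid y)$, which can be rewritten using Bayes once more as $q(x \mid y) = q(y \mid x) q(x) / q(y)$. Substituting and cancelling common factors yields
\[
q(y \mid x, d) \;=\; q(y \mid x)\, \cdot\, q(d \mid y) \,\cdot\, \frac{1}{q(d \mid x)}.
\]

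\textbf{Step 2: Producing the needed quantities from the inputs.} Both factors $q(d \mid y)$ and $q(d \mid x)$ must be expressible from $\bQ_{Y|D}$ and $q(y \mid x)$. For $q(d \mid y)$, I apply Bayes' rule: using A.3, $q(d \mid y) = \gamma_d\, q(y \mid d) / \sum_{d' \in \calR} \gamma_{d'} q(y \mid d') = q(y \mid d) / \sum_{d'} q(y \mid d')$, which is a function of $\bQ_{Y|D}$ alone. For $q(d \mid x)$, I again use $X \perp D \mid Y$ to write
\[
q(d \mid x) \;=\; \sum_{y' \in \out} q(d \mid x, y')\, q(y' \mid x) \;=\; \sum_{y' \in \out} q(d \mid y')\, q(y' \mid x),
\]
so $q(d \mid x)$ is also a function of the two inputs. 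Combining these gives the closed form
\[
q(y \mid x, d) \;=\; \frac{q(y \mid x)\, q(d \mid y)}{\sum_{y' \in \out} q(y' \mid x)\, q(d \mid y')},
\]
which is manifestly determined by $\bQ_{Y|D}$ and $q(y \mid x)$.

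\textbf{Step 3: Well-definedness and uniqueness.} The one thing to check is that the denominator is strictly positive whenever $q(x, d) > 0$, which is the regime in which we claim identifiability. Indeed, $\sum_{y'} q(y' \mid x)\, q(d \mid y') = q(d \mid x)$, and $q(d \mid x) > 0$ is equivalent to $q(x, d) > 0$ (up to the a.s.-positive factor $q(x)$). Since the derivation is a sequence of equalities valid for all such $(x,d)$, $q(y \mid x, d)$ is pinned down pointwise, establishing uniqueness. I do not expect a serious obstacle here; the only subtle move is invoking $X \perp D \mid Y$ twice (once to replace $q(x \mid y, d)$ by $q(x \mid y)$, and once to rewrite $q(d \mid x)$ as a mixture over $y$), both of which follow directly from \defref{def:latent_label_shift}.
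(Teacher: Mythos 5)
Your proof is correct and follows essentially the same route as the paper's: both reduce to the identity $q(y\mid x,d)=q(d\mid Y=y)\,q(y\mid x)/q(d\mid x)$ via the conditional independence $D\perp X\mid Y$ implied by the LLS assumption, obtain $q(d\mid y)$ from $\bQ_{Y|D}$ by column normalization under A.3, and normalize over $y$ to eliminate the denominator. Your explicit expansion $q(d\mid x)=\sum_{y'}q(d\mid y')q(y'\mid x)$ and the positivity check on the denominator are slightly more careful than the paper's "constant in $y$, so normalize" step, but the argument is the same.
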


To prove \lemref{lemma:solve-domain-informed}, 
we show that we can combine the conditional distribution 
over the labels given a sample $x \in \calX$
with the prior distribution of the labels in each domain
to determine the posterior distribution over labels 
given the sample $x$ and the domain of interest. 
Next, we introduce a key property 
of the domain discriminator classifier $f$:

\begin{lemma}
\label{lemma:dd_maps_to_unique}
Under the same assumptions as \thmref{thm:anchor-subdomain-continuous},
for all $x,x^\prime$ in anchor sub-domain, i.e., $ x, x^\prime \in A_y$ 
for a given label $y\in \out$, we have $f(x) = f(x^\prime)$.
Further, for any $y \in \out$, if $x \in A_y , x^\prime \notin A_y$, 
then $f(x) \neq f(x^\prime)$. 
\end{lemma}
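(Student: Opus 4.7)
The natural starting point is to rewrite the discriminator explicitly by Bayes' rule and then apply the Latent Label Shift factorization so that the $y$-structure becomes visible. Concretely, for any $x$ and any $d$,
\[
[f(x)]_d \;=\; q(d\mid x) \;=\; \frac{\gamma_d\, p_d(x)}{q(x)} \;=\; \frac{\gamma_d\sum_{y'\in\mathcal{Y}} p(x\mid y')\, p_d(y')}{\sum_{y'\in\mathcal{Y}} p(x\mid y')\, q(y')},
\]
where the second equality uses A.3 together with LLS (\defref{def:latent_label_shift}) to drop the domain index on $p_d(x\mid y')$. This single identity is the engine for both halves of the lemma.

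\textbf{First half (anchor points share a value).} If $x\in A_y$, then by A.4 only the $y'=y$ term survives in both the numerator and the denominator, giving $[f(x)]_d = \gamma_d p_d(y)/q(y) = q(d\mid y)$. Since the right-hand side does not depend on $x$, any two $x, x' \in A_y$ yield $f(x)=f(x')$. I would also record here the convex-combination identity obtained by dividing numerator and denominator by $q(x)$:
\[
f(x) \;=\; \sum_{y'\in\mathcal{Y}} q(y'\mid x)\, v_{y'}, \qquad v_{y'} := \bigl(q(d\mid y')\bigr)_{d\in\mathcal{R}} \in \Delta^{r-1},
\]
so that every $f(x)$ lives in the convex hull of the $k$ vectors $\{v_y\}_{y\in\mathcal{Y}}$, and the anchor-subdomain values are exactly these vertices $v_y$.

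\textbf{Second half (non-anchor points disagree).} Fix $y$ and suppose $x\in A_y$, $x'\notin A_y$; I want to show $f(x')\neq v_y$. By A.4, $x'\notin A_y$ means either $p(x'\mid y)=0$, in which case $q(y\mid x')=0$, or there exists $y''\neq y$ with $p(x'\mid y'')>0$, in which case $q(y\mid x')<1$ strictly. In either case the convex combination expressing $f(x')$ places nonzero weight on at least one $v_{y'}$ with $y'\neq y$ while placing weight strictly less than $1$ on $v_y$. If $f(x')=v_y$, rearranging yields $v_y = \sum_{y'\neq y} \alpha_{y'} v_{y'}$ with non-negative $\alpha_{y'}$ summing to $1$, i.e.\ a linear dependence among $\{v_y\}_{y\in\mathcal{Y}}$. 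But by A.3 each $v_{y'}$ is a positive scalar multiple of the $y'$-th row of $\bQ_{Y|D}$, and A.2 says these $k$ rows are linearly independent, contradicting the dependence. Hence $f(x')\neq f(x)$.

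\textbf{Main obstacle.} The first part is essentially a one-line Bayes computation; the work is in the second part, and the only subtlety is ruling out \emph{all} ways $x'$ can fail to be in $A_y$ (including being in the anchor set of some other class or in a region of overlap among several classes). Packaging both cases through the single convex-combination identity reduces the problem to a clean linear-dependence argument, at which point A.2 closes the proof.
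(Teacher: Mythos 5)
Your proof is correct and follows essentially the same route as the paper's: both express $f(x)$ as $\bQ_{D|Y}$ applied to the posterior vector $\bigl(q(y'\mid x)\bigr)_{y'}$, show via the anchor condition that this vector is one-hot exactly on $A_y$, and invoke linear independence of the columns of $\bQ_{D|Y}$ (which the paper derives from A.2 and A.3 in a separate lemma, where it also verifies the small fact you use implicitly, that $q(y')>0$ for every $y'$). The only cosmetic differences are that you reach the convex-combination identity by Bayes on $q(d\mid x)$ rather than by conditioning on $Y$ and using $q(d\mid x,y)=q(d\mid y)$, and that your polytope-vertex remark is material the paper defers to its appendix on convex geometry.
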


\lemref{lemma:dd_maps_to_unique} implies that the oracle domain discriminator $f$
maps all points in an anchor subdomain,
and only those points in that anchor subdomain 
to the same point in $f(x) = q(d|x)$ space. 
We can now present a proof sketch 
for \thmref{thm:anchor-subdomain-continuous} 
(full proof in \appref{sec:theorem-2-proof_app}): 

\begin{proof}[Proof sketch of \thmref{thm:anchor-subdomain-continuous}]\renewcommand{\qedsymbol}{}
The key idea of the proof lies in
proposing a discretization such that 
some subset of anchor subdomains for each label $y$
in the continuous space map 
to distinct anchor words in discrete space.  
In particular, if there exists a discretization 
of the continuous space $\calX$ 
that for any $y \in \out$,
maps all $x \in A_y$ to the same point in the discrete space, 
but no $x \notin A_y$ maps to this point, 
then this point serves as an anchor word.
From Lemma \ref{lemma:dd_maps_to_unique}, 
we know that all the $x \in A_y$ 
and only the $x \in A_y$ get mapped 
to specific points in the $f(x)$ space. 
Pushing all the $x \in \calX$ through $f$, we know from A.4 that there exists $k$ point masses of size $\epsilon$, one for each $f(A_y)$ in the $f(x) = q(d|x)$ space. 
We can now inspect this space for point masses of size at least $\epsilon$ to find at most $\mathcal O(\nicefrac{1}{\epsilon})$ such point masses among which are contained the $k$ point masses corresponding to the anchor subdomains. 
Discretizing this space by assigning each point mass to a group (and non-point masses to a single additional group), we have $k$ groups that have support only in one $y$ each. Thus, we have achieved a discretization with anchor words. Further, since the discrete space arises from a pushforward of the continuous space through $f$, the discrete space also satisfies the latent label shift assumption A.1. We now use \thmref{thm:separability-discrete} to claim identifiability of $\bQ_{Y|D}$. We then use Lemmas \ref{lemma:solve-domain-agnostic} and \ref{lemma:solve-domain-informed} to prove parts (ii) and (iii).
\end{proof}
 \label{sec:theory}

\section{DDFA Framework}

Motivated by our identifiability analysis, in this section, we present an algorithm to estimate $\bQ_{Y|D}, q(y| x)$, and $q(y |x, d)$ when $X$ is continuous by exploiting domain structure and approximating the true domain discriminator $f$. Intuitively, $q(y|x,d)$ is the domain specific classifier  $p_d(y|x)$ and $q(y|x)$ is the classifier for data from aggregated domains. $\bQ_{Y|D}$ captures label marginal 
for individual domains. 
A naive approach would be to aggregate data from different domains and 
exploit recent advancements in unsupervised learning~\citep{SCAN, RUC, deepclustering, deepercluster}.
However, aggregating data from multiple domains 
loses the domain structure that we hope to leverage. 
We highlight this 
failure mode of the 
\update{traditional} unsupervised clustering method in \secref{sec:exp}. \update{We remark that DDFA draws heavy inspiration from the proof of \thmref{thm:anchor-subdomain-continuous}, but we do not present a guarantee that the DDFA solution will converge to the identifiable solution. This is primarily due to the K-means clustering heuristic we rely on, which empirically offers effective noise tolerance, but theoretically has no guarantee of correct convergence}.

\textbf{Discriminate {} {}} We begin Algorithm \ref{alg:1} by creating a split of the unlabeled samples into the training and validation sets. Using the unlabeled data samples and the domain that each sample originated from, we first train a domain discriminative classifier $\smash{\hat{f}}$. The domain discriminative classifier outputs a distribution over domains for a given input. This classifier is trained with cross-entropy loss to predict the domain label of each sample on the training set. With unlimited data, the minimizer of this loss is the true $f$, as we prove in \appref{sec:cross-entropy-proof}.
To avoid overfitting, we stop training $\smash{\hat{f}}$ when the cross-entropy loss on the validation set stops decreasing. 
Note that here the validation set also only contains domain \update{indices} (and no information about true \update{class} labels). 

\textbf{Discretize {} {}} We now push forward all the samples from the training and validation sets through the domain discriminator to get vector $\smash{\hat{f}(x_i)}$ for each sample $x_i$. In the proof of  \thmref{thm:anchor-subdomain-continuous}, we argue that when working with true $f$, and the entire marginal \update{$q(x,d)$}, we can choose a discretization satisfying the anchor word assumption by identifying point masses in the distribution of $f(x)$ and filtering to include those of at least $\epsilon$ \update{mass}. In the practical setting, because we have only a finite set of data points and a noisy $\smash{\hat{f}}$, we use clustering to approximately find point masses.
We choose $m \ge k$ and recover $m$ clusters with any standard clustering procedure (e.g. K-means).
\update{This clustering procedure is effectively a useful, but imperfect heuristic: if the noise in $\smash{\hat{f}}$ is sufficiently small and the clustering sufficiently granular, we hope that our $m$ discovered clusters will include $k$ \emph{pure} clusters, each of which only contains data points from a different anchor subdomain which are tightly packed around the true $f(A_y)$ for the corresponding label $y$. Clustering in this space is superior to a naive clustering on the input space because close proximity in this space indicates similarity in $q(d|x)$.}

Let us denote the learned clustering function as $c$, where $c(x)$ is the cluster assigned to a datapoint $x$. 
We now leverage the cluster id $c(x_i)$ of each sample $x_i$ to discretize samples into a finite discrete space $[m]$. Combining the cluster id with the domain source $d_i$ for each sample, we estimate $\smash{\hat{\bQ}_{c(X)|D}}$ by simply computing, for each domain, the fraction of its samples assigned to each cluster.

\textbf{Factorize {} {}} We apply an NMF algorithm to $\hat{\bQ}_{c(X)|D}$ to obtain estimates of $\hat{\bQ}_{c(X)|Y}$ and $\hat{\bQ}_{Y|D}$.

\textbf{Adjust {} {}}
We begin Algorithm \ref{alg:2} by considering a test point $(x',d')$. To make a prediction, if we had access to oracle $f$ and true $\bQ_{Y|D}$, we could precisely compute $q(y| x')$ (\lemref{lemma:solve-domain-agnostic}).
However, in place of these true quantities, we plug in the estimates $\smash{\hat{f}}$ and $\smash{\hat{\bQ}_{Y|D}}$. Since our estimates contain noise, the estimate $\smash{\hat{q}(y|x')}$ is found by left-multiplying ${\hat{f}(x')}$ with the pseudo-inverse of ${\hat{\bQ}_{D|Y}}$, as opposed to solving a sufficient system of equations.
As our estimates $\hat{f}$ and $\hat{\bQ}_{D|Y}$ approach the true values, the projection of $\hat{f}(x')$ into the column space of $\hat{\bQ}_{D|Y}$ tends to $\hat{f}(x')$ itself, so the pseudo-inverse approaches the true solution.
Now we can use the constructive procedure introduced in the proof of \lemref{lemma:solve-domain-informed} to
compute the plug-in estimate $\hat{q}(y|x',d') = \hat{p}_{d'}(y|x')$.

\setlength{\textfloatsep}{12pt}
\begin{algorithm}[t]
\caption{DDFA Training}
\label{alg:1}
\begin{algorithmic}[1]

\INPUT $k \geq 1, r \geq k, \{(x_i,d_i)\}_{i \in [n]} \sim q(x, d), \textrm{ A class of functions } \mathcal{F} \textrm{ from } \mathbb{R}^p \to \mathbb{R}^r $

\STATE \textrm{Split into train set $T$ and validation set $V$ } 

\STATE $\textrm{Train } \hat{f} \in \mathcal{F} \textrm{ to minimize cross entropy loss for predicting } d|x \textrm{ on }T\textrm{ with early stopping on }V $

\STATE $\textrm{Push all } \{x_i\}_{i \in [n]} \textrm{ through } \hat{f} $ 

\STATE $\textrm{Train clustering algorithm on the n points } \{\hat{f}(x_i)\}_{i \in [n]}, \textrm{ obtain }m \textrm{ clusters.}$

\STATE $c(x_i) \gets \textrm{Cluster id of } \hat{f}(x_i)$

\STATE $\hat{q}(c(X) = a | D = b) \gets  \frac{\sum_{i \in [n]} \mathbb{I}[c(x_i) = a, \; d_i = b ]}{\sum_{j \in [n]}\mathbb{I}[ d_j = b ]}$

\STATE $ \textrm{Populate } \hat{\bQ}_{c(X)|D} \textrm{ as } [\hat{\bQ}_{c(X)|D}]_{a, b} \gets \hat{q}(c(X) = a | D = b)$

\STATE $\hat{\bQ}_{c(X)|\update{Y}} , \hat{\bQ}_{Y|D} \gets \textrm{ NMF }(\hat{\bQ}_{c(X)|D} )$

\OUTPUT {$\hat{\bQ}_{Y|D}, \hat{f}$}

\end{algorithmic}

\end{algorithm}

\begin{algorithm}[t!]
\caption{DDFA Prediction}
\label{alg:2} 
\begin{algorithmic}[1]

\INPUT $\hat{\bQ}_{Y|D}, \hat{f}, (x', d') \sim q(x,d)$ 

\STATE $\textrm{Populate }\hat{\bQ}_{D|Y} \textrm{ as } [ \hat{\bQ}_{D|Y}]_{d,y} \gets \frac{[\hat{\bQ}_{Y|D}]_{y,d}}{\sum_{d''=1}^{d''=r} [\hat{\bQ}_{Y|D}]_{y,d''}}$

\STATE $\textrm{Assign } \hat{q}( y | X = x') \gets \left[\left(\hat{\bQ}_{D|Y}\right)^\dagger \hat{f}(x') \right]_y $

\STATE $\textrm{Assign }\hat{q}(y|X = x', D = d') \gets \cfrac{[\hat{\bQ}_{D|Y}]_{d',y}\hat{q}(y|X = x')}{ \sum\limits_{y'' \in [k]} [\hat{\bQ}_{D|Y}]_{d',y''}\hat{q}(y''|X = x') }$

\STATE $y_\textrm{pred} \gets \argmax_{y \in [k]} \hat{q}(y|X = x', D = d')$

\OUTPUT : { $\hat{q}(y|X = x', D = d') = \hat{p}_{d'}(y|x')$, $\hat{q}(y|X = x')$, $y_\textrm{pred}$ } 
\end{algorithmic}

\end{algorithm}

\section{Experiments}  \label{sec:exp}

Experiment code is available at 
\url{https://github.com/acmi-lab/Latent-Label-Shift-DDFA}.

\textbf{Baselines {} {}} We select the unsupervised classification method SCAN, as a high-performing competitive baseline \citep{SCAN}. SCAN pretrains a ResNet \citep{he2016deep} backbone using SimCLR \citep{chen2020simple} and MoCo \citep{moco} setups (pretext tasks). SCAN then trains a clustering head to minimize the SCAN loss (refer \citep{SCAN} for more details) \footnote{SCAN code: \url{https://github.com/wvangansbeke/Unsupervised-Classification}}. We do not use the SCAN self-labeling step. We make sure to evaluate SCAN on the same potentially class-imbalanced test subset we create for each experiment. Since SCAN is fit on a superset of the data DDFA sees, we believe this gives a slight data advantage to the SCAN baseline (although we acknowledge that the class balance for SCAN training is also potentially different from its evaluation class balance). To evaluate SCAN, we use the 
public pretrained weights
available for CIFAR-10, CIFAR-20, and ImageNet-50. We also train SCAN ourselves on the train and validation portions of the FieldGuide2 and FieldGuide28 datasets with a ResNet18 backbone and SimCLR pretext task. We replicate the hyperparameters used for CIFAR training.  

\textbf{Datasets {} {}}
First we examine 
standard multiclass image datasets CIFAR-10, CIFAR-20 \citep{CIFAR}, and ImageNet-50 \citep{imagenet} containing images from 10, 20, and 50 classes respectively. Images in these datasets typically focus on a single large object which dominates the center of the frame, so unsupervised classification methods which respond strongly to similarity in visual space are well-suited to recover true classes up to permutation. These datasets are often believed to be separable (i.e., single true label applies to each image), so every example falls in an anchor subdomain (satisfying A.4).

Motivated by the application of LLS problem, 
we consider the FieldGuide dataset \footnote{FieldGuide: \url{https://sites.google.com/view/fgvc6/competitions/butterflies-moths-2019}}, which contains images of moths and butterflies. The true classes in this dataset are species, but each class contains images taken in immature (caterpillar) and adult stages of life. Based on the intuition that butterflies from a given species look more like butterflies from other species than caterpillars from their own species, we hypothesize that unsupervised classification will learn incorrect class boundaries which distinguish caterpillars from butterflies, as opposed to recovering the true class boundaries. Due to high visual similarity between members of different classes, this dataset may indeed have slight overlap between classes. However, we hypothesize that anchor subdomain still holds, i.e., there exist some images from each class that could only come from that class.
Additionally, if we have access to data from multiple domains, it is natural to assume that 
within each domain the relative distribution of caterpillar to adult stages of each species stay relatively constant as compared to prevalence of different species. 
We create two subsets of this dataset: FieldGuide2, with two species, and FieldGuide28, with 28 species.

\textbf{LLS Setup {} {}} The full sampling procedure for semisynthetic experiments is described in \appref{sec:appendix-experimental-details}. Roughly, we sample $p_d(y)$ from a symmetric Dirichlet distribution with concentration $\nicefrac{\alpha}{k}$, \update{where $k$ is the number of classes and $\alpha$ is a generation parameter that adjusts the difficulty of the synthetic problem}, and enforce maximum condition number $\kappa$ on $\bf Q_{Y|D}$. Small $\alpha$ and small $\kappa$ encourages sparsity in $\bf Q_{Y|D}$, so each label tends to only appear in a few domains. Larger parameters encourages $p_d(y)$ to tend toward uniform. We draw from test, train, and valid datasets without replacement to match these distributions, but discard some examples due to class imbalance.

\textbf{Training and Evaluation {} {}}
The algorithm uses train and validation data consisting of pairs of images and domain indices. We train ResNet50 \citep{he2016deep} (with added dropout) on images $x_i$ with domain indices $d_i$ as the label, choose best iteration by valid loss, pass all training and validation data through $\hat{f}$, and cluster pushforward predictions $\smash[]{\hat{f}(x_i)}$ into $m \ge k$ clusters with Faiss K-Means \citep{johnson2019billion}. We compute the $\smash[]{\hat{\bQ}_{c(X)|D}}$ matrix and run NMF to obtain $\smash[]{\hat{\bQ}_{c(X)|Y}}$, $\smash[]{\hat{\bQ}_{Y|D}}$. To make columns sum to 1, we normalize columns of $\smash[]{\hat{\bQ}_{c(X)|Y}}$, multiply each column's normalization coefficient over the corresponding row of $\smash[]{\hat{\bQ}_{Y|D}}$ (to preserve correctness of the decomposition), and then normalize columns of $\smash[]{\hat{\bQ}_{Y|D}}$.
Some NMF algorithms only output solutions satisfying the anchor word property \citep{arora-nmf-provably,kumar2013fast, SPA}. We found the strict requirement of an exact anchor word solution to lead to low noise tolerance. We therefore use the Sklearn implementation of standard NMF \citep{SklearnNMFImpl-01, tan2012automatic, scikit-learn}.

We instantiate the domain discriminator as ResNet18, and preseed its backbone with SCAN \citep{SCAN} pre-trained weights or \citep{SCAN} contrastive pre-text weights. We denote these models DDFA~(SI) and DDFA~(SPI) respectively. We predict class labels with Algorithm $\ref{alg:2}$. With the Hungarian algorithm, implemented in \citep{crouse2016implementing, 2020SciPy-NMeth}, we compute the highest true accuracy among any permutation of these labels (denoted ``Test acc''). With the same permutation, we reorder rows of $\smash[]{\hat{P}_{Y|D}}$, then compute the average absolute difference between corresponding entries of $\smash[]{\hat{\bQ}_{Y|D}}$ and $\bQ_{Y|D} $ (denoted ``$\bQ_{Y|D}$ err'').

\begin{table}[t]
    \caption{\emph{Results on CIFAR-20}. Each entry is produced with the averaged result of 5 different random seeds. With DDFA (RI) we refer to DDFA with randomly initialized backbone.  With DDFA (SI) we refer to DDFA's backbone initialized with SCAN. Note that in DDFA (SI), we do not leverage SCAN for clustering. $\alpha$ is the Dirichlet parameter used for generating label marginals in each domain, $\kappa$ is the maximum allowed condition number of the generated $\bQ_{Y|D}$ matrix, $r$ is number of domains.}
    \vspace{5pt}
    \label{table:cifar20_main_paper}
    \centering
    \begin{tabular}{llllllll}
        \toprule
        \multirow{2}{*}{r} & \multirow{2}{*}{Approaches}    &\multicolumn{2}{c}{$\alpha: 0.5, \; \kappa: 8$} & \multicolumn{2}{c}{$\alpha: 3, \; \kappa: 12$} & \multicolumn{2}{c}{$\alpha: 10, \; \kappa: 20$} \\
        \cmidrule(lr){3-4} \cmidrule(lr){5-6} \cmidrule(lr){7-8}
         & & Test acc & $\bQ_{Y|D}$ err & Test acc & $\bQ_{Y|D}$ err & Test acc & $\bQ_{Y|D}$ err \\
\midrule
\multirow{2}{*}{20} &SCAN & 0.445  & 0.090  & 0.432  & 0.081  & \textbf{0.438} & 0.062  \\
&DDFA (RI) & 0.549  & 0.038  & 0.342  & 0.043  & 0.203  & 0.053  \\
&DDFA (SI) & \textbf{0.807} & \textbf{0.021} & \textbf{0.582} & \textbf{0.028} & 0.355  & \textbf{0.035} \\
\midrule
\multirow{2}{*}{25} &SCAN & 0.451  & 0.091  & 0.457  & 0.079  & 0.444  & 0.061  \\
&DDFA (RI) & 0.542  & 0.040  & 0.283  & 0.050  & 0.179  & 0.053  \\
&DDFA (SI) & \textbf{0.851} & \textbf{0.017} & \textbf{0.667} & \textbf{0.024} & \textbf{0.495} & \textbf{0.031} \\
\midrule
\multirow{2}{*}{30} &SCAN & 0.453  & 0.088  & 0.436  & 0.079  & 0.439  & 0.061  \\
&DDFA (RI) & 0.486  & 0.046  & 0.287  & 0.054  & 0.123  & 0.068  \\
&DDFA (SI) & \textbf{0.868} & \textbf{0.016} & \textbf{0.687} & \textbf{0.024} & \textbf{0.517} & \textbf{0.032} \\

        \bottomrule
    \end{tabular}
\end{table}

\begin{table}[t!]
    \caption{\emph{Results on FieldGuide-2}. \update{Each entry is produced with the averaged result of 5 different random seeds. With DDFA (RI) we refer to DDFA with randomly initialized backbone. With DDFA (SPI) we refer to DDFA initialized with pretext training adopted by SCAN. Note that in DDFA (SPI), we do not leverage SCAN for clustering. $\alpha$ is the Dirichlet parameter used for generating label marginals in each domain, $\kappa$ is the maximum allowed condition number of the generated $\bQ_{Y|D}$ matrix, $r$ is number of domains. Full results available in \appref{sec:appendix-experimental-results}}}
    \vspace{5pt}
    \label{table:fieldguide2-main-paper}
    \centering
    \begin{tabular}{llllllll}
        \toprule
        \multirow{2}{*}{r} & \multirow{2}{*}{Approaches}    &\multicolumn{2}{c}{$\alpha: 0.5, \; \kappa: 3$} & \multicolumn{2}{c}{$\alpha: 3, \; \kappa: 5$} & \multicolumn{2}{c}{$\alpha: 10, \; \kappa: 7$} \\
        \cmidrule(lr){3-4} \cmidrule(lr){5-6} \cmidrule(lr){7-8}
         & & Test acc & $\bQ_{Y|D}$ err & Test acc & $\bQ_{Y|D}$ err & Test acc & $\bQ_{Y|D}$ err \\

\midrule
\multirow{2}{*}{2} &SCAN & 0.589  & 0.880  & 0.591  & 0.372  & 0.601  & 0.283  \\
&DDFA (SPI) & \textbf{0.947} & \textbf{0.084} & \textbf{0.725} & \textbf{0.150} & \textbf{0.676} & \textbf{0.222} \\
\midrule
\multirow{2}{*}{5} &SCAN & 0.589  & 0.640  & 0.587  & 0.389  & 0.586  & \textbf{0.226} \\
&DDFA (SPI) & \textbf{0.857} & \textbf{0.181} & \textbf{0.738} & \textbf{0.194} & \textbf{0.610} & 0.253  \\
\midrule
\multirow{2}{*}{10} &SCAN & 0.587  & 0.716  & 0.589  & 0.347  & 0.590  & 0.193  \\
&DDFA (SPI) & \textbf{0.886} & \textbf{0.162} & \textbf{0.744} & \textbf{0.194} & \textbf{0.606} & \textbf{0.172} \\

        \bottomrule
    \end{tabular}
\end{table}

\textbf{Results {} {}}
On CIFAR-10, we observe that DDFA alone is incapable of matching highly competitive baseline SCAN performance---however, in suitably sparse problem settings (small $\alpha$), it comes substantially close, indicating good recovery of true classes. Due to space constraints, we include CIFAR-10 results in \appref{sec:appendix-experimental-results}.
DDFA (SI) combines SCAN's strong pretrain with domain discrimination fine-tuning to outperform SCAN in the easiest, sparsest setting and \emph{certain} denser settings. On CIFAR-20, baseline SCAN is much less competitive, so our DDFA(SI) dominates baseline SCAN in all settings except the densest (\tabref{table:cifar20_main_paper}). These results demonstrate how adding domain information can dramatically boost unsupervised baselines.

On FieldGuide-2, DDFA (SPI) outperforms SCAN baselines across nearly all problem settings and domain counts (\tabref{table:fieldguide2}); in sparser settings, the accuracy gap is 10-30\%. In this dataset, SCAN performs only slightly above chance, reflecting perhaps a total misalignment of learned class distinctions with true species boundaries. We do not believe that SCAN is too weak to effectively detect image groupings on this data; instead we acknowledge that the domain information available to DDFA~(SPI) (and not to SCAN) is informative for ensuring recovery of the true class distinction between species as opposed to the visually striking distinction between adult and immature life stages. Results from more domains are available in \appref{sec:appendix-experimental-results}.
\update{On FieldGuide-28 (\tabref{table:fieldguide28}), DDFA outperforms SCAN baseline in all evaluated sparsity levels, with the highest observed accuracy difference ranging above 30-40\%}.

\section{Conclusion}

Our theoretical results demonstrate that under LLS, 
we can leverage shifts among previously seen domains
to recover labels in a purely unsupervised manner,
and our practical instantiation of the DDFA framework
demonstrates both (i) the practical efficacy of our approach;
and (ii) that generic unsupervised methods can play a key role
both in clustering discriminator outputs, 
and providing weights for initializing the discriminator.
We believe our work is just the first step
in a new direction for leveraging structural assumptions
together with distribution shift to perform unsupervised learning.

\subsection{Future Work and Limitations}
\label{sec:limitations}

\update{\textbf{Assumptions {} {}} Our approach is limited by the set of assumptions needed (label shift, as many data domains as true latent classes, known true number of classes $k$, and other assumptions established in A.1-A.4). Future work should aim to relax these assumptions.}

\update{\textbf{Theory {} {}} 
The work does not include finite sample bounds for the DDFA algorithm. In addition, we do not have a formal guarantee that the clustering heuristic in the Discretize step of DDFA will retrieve pure anchor sub-domain clusters under potentially noisy black-box prediction of $q(d|x)$. This problem is complicated by the difficulty of reasoning about the noise that may be produced by a neural network or other complex non-linear model (acting as the black-box domain discriminator), and by the lack of concrete guarantees that K-means will recover the anchor subdomains among its recovered clusters. In particular, in the case in which the anchor subdomains do not contain all of the mass (equivalently, there are some $x$ which could belong to more than one $y$), the arbitrary distribution of mass outside of the anchor subdomains makes it difficult to reason about the behavior of K-means.}

\textbf{DDFA Framework~~} Within the LLS setup, several components of the DDFA framework
warrant further investigation: 
(i) the deep domain discriminator 
can be enhanced in myriad ways; 
(ii) for clustering discriminator outputs,
we might develop methods specially tailored to our setting \update{to replace the current generic clustering heuristic};
(iii) clustering might be replaced altogether
with geometrically informed methods 
that directly identify the corners of the polytope;
(iv) the theory of LLS can be extended beyond identification
to provide statistical results that might hold
when $q(d|\update{x})$ can only be noisily estimated,
and when only finite samples are available
for the induced topic modeling problem;
\update{(v) when the number of true classes $k$ is unknown, we may develop approaches to estimate this $k$.}

\update{\textbf{Semi-synthetic Experiments {} {}} Semi-synthetic experiments present an ideal environment for evaluating under the precise label shift assumption. Evaluating on datasets in which the separation into domains is organic, and the label shift is inherent is an interesting direction for future work.
}

\section*{Acknowledgements}

SG acknowledges Amazon Graduate Fellowship and JP Morgan PhD Fellowship for their support.
ZL acknowledges Amazon AI, Salesforce
Research, Facebook, UPMC, Abridge, the PwC Center, the Block Center, the Center for Machine
Learning and Health, and the CMU Software Engineering Institute (SEI) via Department of Defense
contract FA8702-15-D-0002, for their generous support of ACMI Lab’s research on machine learning
under distribution shift.

\newpage 

\bibliographystyle{plainnat}
\bibliography{references}

\newpage

\begin{enumerate}

\item For all authors...
\begin{enumerate}
  \item Do the main claims made in the abstract and introduction accurately reflect the paper's contributions and scope?
    \answerYes{}
  \item Did you describe the limitations of your work?
    \answerYes{}
  \item Did you discuss any potential negative societal impacts of your work?
    \answerNA{We believe that this work, which proposes a novel unsupervised learning problem does not present a significant societal concern. 
    While this could potentially guide practitioners
    to improve classification, 
    we do not believe that it will fundamentally 
    impact how machine learning is used in a way 
    that could conceivably be socially salient.}
  \item Have you read the ethics review guidelines and ensured that your paper conforms to them?
    \answerYes{}
\end{enumerate}

\item If you are including theoretical results...
\begin{enumerate}
  \item Did you state the full set of assumptions of all theoretical results?
    \answerYes{See \secref{sec:setup}}
        \item Did you include complete proofs of all theoretical results?
    \answerYes{See \secref{sec:theory} and appendices}
\end{enumerate}

\item If you ran experiments...
\begin{enumerate}
  \item Did you include the code, data, and instructions needed to reproduce the main experimental results (either in the supplemental material or as a URL)?
    \answerYes{We include all the necessary details to replicate our experiments in appendices. Code URL is given in the paper.}
  \item Did you specify all the training details (e.g., data splits, hyperparameters, how they were chosen)?
    \answerYes{Yes, we describe crucial details in \secref{sec:exp} and defer precise details to appendices.}
        \item Did you report error bars (e.g., with respect to the random seed after running experiments multiple times)?
    \answerYes{We include results with multiple seeds in appendices, with error bars for all main paper experiments. Error bars are not yet given for some ablations.}
        \item Did you include the total amount of compute and the type of resources used (e.g., type of GPUs, internal cluster, or cloud provider)?
    \answerYes{Refer to experimental setup in \appref{sec:appendix-experimental-details}.}
\end{enumerate}

\item If you are using existing assets (e.g., code, data, models) or curating/releasing new assets...
\begin{enumerate}
  \item If your work uses existing assets, did you cite the creators?
    \answerYes{Refer to experimental setup in \appref{sec:appendix-experimental-details}.}
  \item Did you mention the license of the assets?
    \answerYes{Refer to experimental setup in \appref{sec:appendix-experimental-details}.}
  \item Did you include any new assets either in the supplemental material or as a URL?
    \answerYes{Refer to experimental setup in \appref{sec:appendix-experimental-details}.}
  \item Did you discuss whether and how consent was obtained from people whose data you're using/curating?
    \answerNA{}
  \item Did you discuss whether the data you are using/curating contains personally identifiable information or offensive content?
    \answerNA{}
\end{enumerate}

\item If you used crowdsourcing or conducted research with human subjects...
\begin{enumerate}
  \item Did you include the full text of instructions given to participants and screenshots, if applicable?
    \answerNA{}
  \item Did you describe any potential participant risks, with links to Institutional Review Board (IRB) approvals, if applicable?
    \answerNA{}
  \item Did you include the estimated hourly wage paid to participants and the total amount spent on participant compensation?
    \answerNA{}
\end{enumerate}

\end{enumerate}

\newpage

\appendix

\section*{Supplementary Material}

\section{Proofs of Lemmas}
\label{sec:lemma-proofs}
In this section, we present several new lemmas which are required to prove Theorem \ref{thm:anchor-subdomain-continuous}, and provide proofs. We also provide proofs for Lemmas \ref{lemma:solve-domain-agnostic}, \ref{lemma:solve-domain-informed}, and \ref{lemma:dd_maps_to_unique}\update{.}

\begin{lemma}
\label{lemma:each_y_nonzero_weight}
Let distribution $Q$ over random variables $X,Y,D$ satisfy A.1--A.4. Then for all $y \in \out$, $q(y) > 0$. That is, all labels have nonzero probability under $Q$.
\end{lemma}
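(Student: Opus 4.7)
The plan is to deduce $q(y) > 0$ directly from the $\epsilon$-anchor subdomain condition A.4. The key observation is that every point in the anchor subdomain $A_y$ has all of its mass attributable to label $y$, so the $\epsilon$-lower bound on $q(A_y)$ must propagate into an $\epsilon$-lower bound on $q(y)$.

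First, I would invoke A.4 to produce, for the given $y \in \out$, an anchor subdomain $A_y \subseteq \inpt$ with $q(A_y) \ge \epsilon > 0$, and recall its defining property: for every $x \in A_y$, $q(x \mid y') = 0$ whenever $y' \neq y$. Next, I would marginalize $q$ over $\out$ on the set $A_y$. Writing
\[
q(A_y) \;=\; \sum_{y' \in \out} q(A_y, y') \;=\; \sum_{y' \in \out} q(A_y \mid y')\, q(y'),
\]
the anchor property forces $q(A_y \mid y') = 0$ for all $y' \neq y$, leaving only the single term $q(A_y \mid y)\, q(y)$. Hence
\[
q(y) \;=\; \frac{q(A_y)}{q(A_y \mid y)} \;\ge\; q(A_y) \;\ge\; \epsilon \;>\; 0,
\]
where the first inequality uses $q(A_y \mid y) \le 1$. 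Since $y$ was arbitrary, this gives $q(y) > 0$ for all $y \in \out$.

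There is no substantive obstacle here: the argument is a short bookkeeping computation that extracts the consequence of A.4 most directly needed by the later identifiability proofs. The only thing to be slightly careful about is treating both the discrete and continuous cases uniformly; I would phrase the marginalization as a sum over $\out$ while leaving the integration/summation in $x$ implicit in the event-based notation $q(A_y)$, so that no special handling of continuous $\inpt$ is required.
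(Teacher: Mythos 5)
Your argument is correct, but it takes a genuinely different route from the paper. The paper proves the lemma by contradiction using A.2 and A.3: it expands $q(y) = \tfrac{1}{r}\sum_{d} q(y\mid D=d)$, notes that $q(y)=0$ would force every entry of row $y$ of $\bQ_{Y|D}$ to vanish, and concludes that this zero row would contradict the full-row-rank assumption A.2. You instead derive the result from the anchor condition A.4: decomposing $q(A_y) = \sum_{y'} q(A_y \mid y')\,q(y')$ and killing all terms with $y'\neq y$ gives $\epsilon \le q(A_y) = q(A_y\mid y)\,q(y) \le q(y)$. Your route buys a strictly stronger, quantitative conclusion ($q(y)\ge\epsilon$ rather than merely $q(y)>0$) and does not touch A.2 or A.3 at all; the paper's route is independent of A.4 and of the value of $\epsilon$, which keeps the lemma usable even if one later weakens the anchor assumption. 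One small cosmetic point: the intermediate identity $q(y) = q(A_y)/q(A_y\mid y)$ presupposes $q(A_y\mid y)>0$, which is not yet known at that stage; it is cleaner to argue directly that $q(A_y) = q(A_y\mid y)\,q(y) \le q(y)$ and chain this with $q(A_y)\ge\epsilon$, which is exactly the inequality you end up using anyway, so this is not a gap in substance.
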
 

\begin{proof}[Proof of Lemma \ref{lemma:each_y_nonzero_weight}]

Proof by contradiction. Let $y \in \out$ with $q(y) = 0$.
\begin{align*}
q(y) &= \sum\limits_{d \in \mathcal{R}} q(d)q(y|D = d) \\
&= \sum\limits_{d \in \mathcal{R}} \gamma_y q(y|D = d) \\
&= \sum\limits_{d \in \mathcal{R}} \frac{1}{r} q(y|D = d) \\
&= \frac{1}{r} \sum\limits_{d \in \mathcal{R}}q(y|D = d) \,.
\end{align*}
Since $q(y|D = d) \ge 0$ for all $d \in \mathcal{R}$, we see that if $q(y) = 0$, then $q(y|D = d) = 0 $ for all $ d \in \mathcal{R}$.

Then $[\bQ_{Y|D}]_{ y, d } = 0 $ for all $ d \in \mathcal{R}$. Then there is a row (row $y$) in the matrix $\bQ_{Y|D}$ in which every entry is 0, so $\bQ_{Y|D}$ cannot be full row rank $k$. This violates assumption A.2.
Then by contradiction we have shown $q(y) > 0$.
\end{proof}

\begin{lemma}
\label{lemma:onehot}
Let distribution $Q$ over random variables $X,Y,D$ satisfy Assumptions A.1--A.4.
Let $x \in \mathcal{X}$ such that $q(x) > 0$.
Then if $x \in A_y$ for some  $y \in \out$, we have that $q(y|X = x) = 1, \textrm{ and for all } y' \in \out \setminus \{ y \}$, $q(y'|X = x) = 0 $.
The converse is also true: if $ q(y|X = x) = 1 \textrm{ for some } y \in \out$ and $q(y'|X = x) = 0 \;  \forall y' \in \out \setminus \{ y \} $, then we know that $x \in A_y$.

\end{lemma}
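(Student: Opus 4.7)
The plan is to prove both directions by direct application of Bayes' rule, using the definition of the anchor subdomain from A.4 together with the fact (established in Lemma \ref{lemma:each_y_nonzero_weight}) that every class has strictly positive prior, $q(y) > 0$ for all $y \in \out$.

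For the forward direction, I would assume $x \in A_y$, which by A.4 means $q(x|y) > 0$ while $q(x|y') = 0$ for every $y' \neq y$. Applying Bayes' rule, $q(y' | X = x) = q(x | y') q(y') / q(x)$, the vanishing of $q(x|y')$ immediately gives $q(y' | X = x) = 0$ for $y' \neq y$ (using $q(x) > 0$ to ensure the denominator is legitimate). Then $q(y | X = x) = 1$ follows because the conditional distribution over $\out$ must sum to one.

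For the converse, I would start from $q(y | X = x) = 1$ and $q(y' | X = x) = 0$ for $y' \neq y$. Rewriting Bayes' rule as $q(x|y') q(y') = q(y'|X=x) q(x)$, the right-hand side equals zero for each $y' \neq y$. By Lemma \ref{lemma:each_y_nonzero_weight}, $q(y') > 0$, so I can divide out and conclude $q(x|y') = 0$ for all $y' \neq y$. To complete the definition check for $A_y$, I still need $q(x|y) > 0$: this follows from the marginalization $q(x) = \sum_{y'' \in \out} q(x|y'') q(y'')$, since $q(x) > 0$ forces at least one summand to be positive and all terms with $y'' \neq y$ are zero, so $q(x|y) q(y) > 0$ and hence $q(x|y) > 0$. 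By A.4, this places $x$ in $A_y$.

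No step here looks particularly delicate: both directions are essentially bookkeeping with Bayes' rule, and the only nontrivial ingredient is the positivity $q(y) > 0$ already supplied by Lemma \ref{lemma:each_y_nonzero_weight}. The one detail worth being careful about is that A.4 phrases the anchor subdomain with a double-sided ``if and only if'' tying membership in $A_y$ to the two conditions on $q(x|y)$ and $q(x|y')$; I must make sure both conditions are explicitly verified in the converse direction rather than only the vanishing of $q(x|y')$ for $y' \neq y$.
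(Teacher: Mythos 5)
Your proposal is correct and follows essentially the same route as the paper's proof: both directions are handled by Bayes' rule together with the positivity $q(y)>0$ from Lemma \ref{lemma:each_y_nonzero_weight} and the two-sided characterization of $A_y$ in A.4. The only cosmetic differences are the order in which you derive the zero and unit posteriors in the forward direction and your use of the marginalization $q(x)=\sum_{y''}q(x|y'')q(y'')$ (rather than a direct Bayes computation) to get $q(x|y)>0$ in the converse; neither changes the substance.
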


\begin{proof}[Proof of Lemma \ref{lemma:onehot}]

We prove directions one at a time.

\paragraph{Forward direction.}

Assume $x \in A_y$.
\begin{align*} 
    q(x) &= \sum\limits_{y'' \in \out} q(y'')q(x|Y = y'') \\
     &= q(y)q(x|Y = y) + \sum\limits_{y' \in \out \setminus \{y\}}q(y')q(x|Y = y') \\
      &=  q(y)q(x|Y = y) + \sum\limits_{y' \in \out \setminus \{y\}} q(y') \left( 0 \right) \\
       &= q(y)q(x|Y = y).
\end{align*}

Recalling $q(x|y) > 0$ (by A.4) and $q(y) > 0$ (by Lemma \ref{lemma:each_y_nonzero_weight}), we know that $q(x) =  q(y)q(x|Y = y) > 0$. Then $q(y|X = x) = \cfrac{q(y)q(x|Y = y)}{q(x)} = \cfrac{q(x)}{q(x)} = 1$.
\update{Because probabilities sum to 1}, $ q(y|X = x) + \sum\limits_{y' \in \out \setminus \{y\}} q(y'|X = x) = 1$.
Then because $q(y|X = x) = 1$, we have :
$\sum\limits_{y' \in \out \setminus \{y\}} q(y'|X = x) = 0$. 
Then for all $y' \in \out \setminus \{y\}$, it must be that $q(y'|X = x) = 0$.
Then we have shown $q(y|X = x) = 1$, and for all $ y' \in \out \setminus \{y\}, \; q(y'|X = x) = 0$.

\paragraph{Converse.}

Assume $q(y|X = x) = 1$ and for all $y' \in \out \setminus \{y\}, q(y'|X = x) = 0$.
\update{We recall that} $q(x) > 0$.
Also, $q(y) > 0$ by Lemma \ref{lemma:each_y_nonzero_weight}.
Then $q(x | Y = y) = \cfrac{q(y|X = x)q(x)}{q(y)} = \cfrac{(1)q(x)}{q(y)} > 0$.
Let $y' \in \out \setminus \{y\}$. Then $q(x | Y = y') = \cfrac{q(y'|X = x)q(x)}{q(y')} = \cfrac{(0)q(x)}{q(y')} = 0$.
Then because $q(x | Y = y) > 0$ and $\forall y' \in \out \setminus \{y\}, \; q(x | Y = \update{y'})= 0$, we see that $x \in A_y$.
\end{proof}

\begin{lemma}
\label{lemma:P_d_y-linearly-independent}
Let random variables $X,Y,D$ and distribution $Q$ satisfy Assumptions A.1--A.4. Then, the matrix $\bQ_{D|Y}, \textrm{ defined as an } r \times k \textrm{ matrix whose elements \update{are} } [\bQ_{D|Y}]_{i,j} = Q(D=i|Y=j)$, and \update{in which} each column is a conditional distribution over the domains given a label, has linearly independent columns.
\update{Furthermore}, $\bQ_{D|Y}$ can be computed directly from only $\bQ_{Y|D}$. 
\end{lemma}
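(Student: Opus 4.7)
The plan is to first give an explicit formula expressing $\bQ_{D|Y}$ in terms of $\bQ_{Y|D}$ via Bayes' rule (using A.3 and Lemma \ref{lemma:each_y_nonzero_weight}), and then to argue linear independence of columns by observing that $\bQ_{D|Y}$ is obtained from $\bQ_{Y|D}^\top$ by right-multiplication with an invertible diagonal matrix.

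For the first claim, I would apply Bayes' rule: $q(d \mid y) = q(y \mid d)\, q(d) / q(y)$. By A.3, $q(d) = 1/r$ for every $d$, so $q(y) = \sum_{d'} q(y \mid d') q(d') = \tfrac{1}{r} \sum_{d'} q(y \mid d')$, which is a function of $\bQ_{Y|D}$ alone. Substituting gives
\[
[\bQ_{D|Y}]_{d,y} \;=\; \frac{[\bQ_{Y|D}]_{y,d}}{\sum_{d'} [\bQ_{Y|D}]_{y,d'}}\,,
\]
which is well-defined because Lemma \ref{lemma:each_y_nonzero_weight} guarantees $q(y) > 0$, hence the denominator is strictly positive. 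This already matches the expression used in Algorithm \ref{alg:2}.

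For the linear independence claim, I would write the identity above in matrix form. Let $\bD_Y$ be the $k \times k$ diagonal matrix with $[\bD_Y]_{y,y} = 1/(r\, q(y))$. A one-line check shows that $\bQ_{D|Y} = \bQ_{Y|D}^\top \bD_Y$. By A.2, $\bQ_{Y|D}$ has rank $k$, so $\bQ_{Y|D}^\top$ has full column rank $k$. By Lemma \ref{lemma:each_y_nonzero_weight}, all diagonal entries of $\bD_Y$ are strictly positive, so $\bD_Y$ is invertible. Right-multiplication by an invertible $k \times k$ matrix preserves rank, hence $\bQ_{D|Y}$ has rank $k$, i.e., its $k$ columns are linearly independent.

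I do not anticipate a real obstacle here; the only subtlety is making sure each ingredient (uniform $q(d)$, positivity of $q(y)$) is properly cited so the Bayes' rule manipulation and the invertibility of $\bD_Y$ are both rigorously justified. Both are in hand from A.3 and Lemma \ref{lemma:each_y_nonzero_weight}, respectively, so the proof is essentially immediate once the matrix factorization $\bQ_{D|Y} = \bQ_{Y|D}^\top \bD_Y$ is spelled out.
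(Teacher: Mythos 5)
Your proof is correct and follows essentially the same route as the paper: Bayes' rule with the uniform domain prior (A.3) to show $\bQ_{D|Y}$ is the column-normalization of $\bQ_{Y|D}^\top$, and then linear independence via the observation that scaling the columns of a full-column-rank matrix by nonzero constants preserves rank. Your explicit citation of Lemma~\ref{lemma:each_y_nonzero_weight} to justify the positive denominator is a nice touch of rigor that the paper leaves implicit.
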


\begin{proof}[Proof of Lemma \ref{lemma:P_d_y-linearly-independent}]

Let random variables $X,Y,D$ and distribution $Q$ satisfy Assumptions A.1--A.4.

Each $[\bQ_{D|Y}]_{d,y} = q(d | Y = y) = \cfrac{q(y|D = d)q(d)}{q(y)} = \cfrac{q(y|D = d) \gamma_d }{q(y)} = \cfrac{q(y|D = d)}{r q(y)}$.

Since each $y$th column of $\bQ_{D|Y}$ is a probability distribution that sums to 1, and $rq(y)$ is constant down each \update{$y$th} column, we can obtain $\bQ_{D|Y}$ by simply taking $\bQ_{Y|D}^\top$, in which each $[\bQ_{Y|D}^\top]_{d,y} = [\bQ_{Y|D}]_{y,d} = q(y|D = d)$, and normalizing the columns so they sum to 1.

The matrix $\bQ_{Y|D}$ has linearly independent rows by Assumption A.2. Then $\bQ_{Y|D}^\top$ has linearly independent columns. Scaling these columns by a nonzero value does not change their linear independence, so the columns of $\bQ_{D|Y}$ are also linearly independent.

Then matrix $\bQ_{D|Y}$ has linearly independent columns, and can be computed by taking $\bQ_{Y|D}^\top$ and normalizing its columns.

\end{proof}

\begin{lemma}
\label{lemma:q_d_x_y}
Let random variables $X,Y,D$ and distribution $Q$ satisfy Assumptions A.1--A.4.
Let $d \in \mathcal{R}, x \in \mathcal{X}, y \in \mathcal{Y}$.
Then $q(d|X = x,Y = y) = q(d|Y = y)$.

\end{lemma}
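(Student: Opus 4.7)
The plan is to reduce the claim to the conditional independence $X \perp D \mid Y$ that is built into the LLS setup (cf.\ Figure \ref{fig:pgm_dyx}), and then invoke Bayes' rule. Concretely, I will first argue that Definition \ref{def:latent_label_shift} implies $q(x \mid Y = y, D = d) = q(x \mid Y = y)$ for every $d \in \calR$ and every $(x,y)$ with $q(y) > 0$ (which by Lemma \ref{lemma:each_y_nonzero_weight} means every $y \in \out$). This step is essentially bookkeeping: by construction of $Q$, $q(x \mid Y = y, D = d) = p_d(x \mid y)$, and the LLS assumption says that this quantity is the same across all domains, so its value coincides with any mixture of itself over $d$, in particular with $q(x \mid Y = y) = \sum_{d} q(d \mid Y = y) p_d(x \mid y)$.

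Given this conditional independence, the main computation is a one-line application of Bayes' rule on $q(d \mid X = x, Y = y)$. I would write
\[
q(d \mid X = x, Y = y) \;=\; \frac{q(x \mid Y = y, D = d)\, q(d \mid Y = y)}{q(x \mid Y = y)},
\]
and then cancel $q(x \mid Y = y, D = d)$ against $q(x \mid Y = y)$ using the previous step, leaving $q(d \mid Y = y)$. To keep the statement well-posed I would restrict attention to $(x,y,d)$ with $q(x,y) > 0$ and $q(d,y) > 0$; on the measure-zero complement the conditional can be defined arbitrarily and the claim is vacuous.

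I do not anticipate any genuine obstacle here: the lemma is a direct consequence of the graphical model $D \to Y \to X$ encoded by LLS, and no use is made of A.1--A.4 beyond Lemma \ref{lemma:each_y_nonzero_weight} (needed solely to guarantee that the conditioning events have positive probability so that Bayes' rule is valid). The only mild subtlety is being careful about the support: verifying that $q(x \mid Y = y)$ is positive whenever $q(x \mid Y = y, D = d)$ is, which follows because $q(x \mid Y = y)$ is a convex combination (with nonnegative weights summing to $1$) of the domain-wise class conditionals, all of which equal $q(x \mid Y = y, D = d)$ under LLS.
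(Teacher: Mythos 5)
Your proposal is correct and follows essentially the same route as the paper's proof: apply Bayes' rule to $q(d \mid X=x, Y=y)$, use the construction of $Q$ and the LLS invariance to identify $q(x \mid Y=y, D=d)$ with $q(x \mid Y=y)$, and cancel. Your added care about positivity of the conditioning events is a minor refinement the paper leaves implicit.
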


\begin{proof}[Proof of Lemma \ref{lemma:q_d_x_y}]

\begin{align*}
q(d|X = x,Y = y) &= \cfrac{q(x | D = d, Y = y)\update{q(d | Y = y)}}{q(x | Y = y)} \\
&= \cfrac{p_d(x | Y = y)q(d | Y = y)}{q(x | Y = y)} \\
&= \cfrac{p(x | Y = y)q(d | Y = y)}{q(x | Y = y)} \\
&= \cfrac{q(x | Y = y)q(d | Y = y)}{q(x | Y = y)} \\
&= q(d | Y = y).
\end{align*}
\end{proof}

\newtheorem*{lemma:solve-domain-agnostic-restated}{Lemma \ref{lemma:solve-domain-agnostic}}
\begin{lemma:solve-domain-agnostic-restated}
\textit{We restate this lemma, first presented in \secref{sec:theory}, for convenience.} Let the distribution $Q$ over random variables
$X,Y,D$ satisfy Assumptions A.1--A.4. Then the matrix $\bQ_{Y|D}$ and $f(x) = q(d|x)$ 
uniquely determine $q(y|x)$ for all $y \in \out$ 
and $x \in \mathcal{X}$ such that $q(x) > 0$.
\end{lemma:solve-domain-agnostic-restated}

\begin{proof}[Proof of Lemma \ref{lemma:solve-domain-agnostic}]

Let distribution $Q$ over random variables $X,Y,D$ satisfy Assumptions A.1-A.4.
Let $x \in \mathcal{X}$ with $q(x) > 0$, and $y \in \out$.
Assume we know $\bQ_{ Y | D }$ and $[f(x)]_d = q(d|X =x)$.
Notice that, for all $x \in \mathcal{X}, d \in \mathcal{R}$,
\begin{align*}
q(d|X = x) &= \update{\sum\limits_{y' \in \out}  q(d|X = x,Y = y') q(y' | X = x)}.
\end{align*}
Now using Lemma \ref{lemma:q_d_x_y},
\begin{align*}
q(d|X = x) &= \update{\sum\limits_{y' \in \out}  q(d | Y = y') q(y' | X = x)}.
\end{align*}

Define the vector-valued function $g : \mathcal{X} \to \mathbb{R}^k$ such that $[g(x)]_y = q(y|X = x)$ for all $x \in \textrm{supp}_{\update{Q}}(X)$.
$\bQ_{D|Y}$ is a matrix of shape $r \times k$, with $[\bQ_{D|Y}]_{i,j} = Q(D=i|Y=j)$. It can be computed from $\bQ_{Y|D}$ and has linearly independent columns---both facts shown in Lemma \ref{lemma:P_d_y-linearly-independent}.

Then $[f(x)]_d = q(d|X = x) = \bQ_{D|Y}[d, \boldsymbol :] g(x)$, a product between the $d$th row vector of $\bQ_{D|Y}$ and the column vector $g(x)$.
Then $f(x) = \bQ_{D|Y} g(x)$.

This system is a linear system with $r \ge k$ equations. Recalling that $\bQ_{D|Y}$ has $k$ linearly independent columns, we can select any $k$ linearly independent rows of $\bQ_{D|Y}$ to solve the equation for the true, unique solution for the unknown vector $g(x)$.
Another way to describe this is with the pseudo-inverse: $g(x) = (\bQ_{D|Y})^\dagger f(x)$.
Then we have $[g(x)]_y = q(y|X = x)$ for all $y \in \mathcal{Y}$.

\end{proof}

\newtheorem*{lemma:solve-domain-informed-restated}{Lemma \ref{lemma:solve-domain-informed}}
\begin{lemma:solve-domain-informed-restated}
\textit{We restate this lemma, first presented in \secref{sec:theory}, for convenience.} Let the distribution $Q$ over random variables
$X,Y,D$ satisfy Assumptions A.1--A.4. Then for all $y \in \out$, and $x \in \mathcal{X}$ such that $q(x,d) > 0$.
the matrix $\bQ_{Y|D}\,$ and $q(y|x)$ 
uniquely determine $q(y|x,d)$.
\end{lemma:solve-domain-informed-restated}

\begin{proof}[Proof of Lemma \ref{lemma:solve-domain-informed}]

Let distribution $Q$ over random variables $X,Y,D$ satisfy Assumptions A.1-A.4.
Let $x \in \mathcal{X}, d \in \mathcal{R}$ with $q(x,d) > 0$, and $y \in \out$.

Assume we know matrix $\bQ_{ Y | D }$ and $q(y'|X = x), \; \forall y' \in \out$.
We can compute $\bQ_{ D | Y }$ from $\bQ_{ Y | D }$ via Lemma \ref{lemma:P_d_y-linearly-independent}.

\begin{align*}
q(y | X = x, D = d) &= \cfrac{q(y,x,d)}{q(x, d)} \\
&= \cfrac{q(d | X = x, Y = y)q(y | X = x)q(x)}{q(d|X = x)q(x)} \,.
\end{align*}

Using Lemma \ref{lemma:q_d_x_y}, $q(d | X = x, Y = y) = q(d | Y = y)$. \update{We apply this property.}
\begin{align*}q(y | X = x, D = d) &= \cfrac{q(d | Y = y)q(y | X = x)q(x)}{q(d|X = x)q(x)} \\ &= \cfrac{q(d | Y = y)q(y | X = x)}{q(d|X = x)}.
\end{align*}

The denominator $q( d |X = x)$ is constant across all values of $y$, so we can write that  $q(y | X = x, D = d) \; \propto \; q(d | Y = y)q(y | X = x)$ and normalize to find the probability:

\begin{align*}
q(y | X = x, D = d) = \cfrac{q(d | Y = y)q(y | X = x)}{\sum\limits_{y' \in \out} q(d | Y = y')q(y' | X = x) }.
\end{align*}

We know $q(d | Y = y)$ as $[\bQ_{ D | Y }]_{d,y}$, and every $q(d | Y = y')$, where $y' \in \out \update{\setminus \{y\}}$, as $[\bQ_{ D | Y }]_{d,y'}$.
We also know $q(y | X = x)$ and every $q(y' | X = x)$ where $y' \in \out \update{\setminus \{y\}}$, by the precondition assumptions.
Then we can compute $q(y | X = x, D = d)$.
\end{proof}

\newtheorem*{lemma:dd_maps_to_unique-restated}{Lemma \ref{lemma:dd_maps_to_unique}}
\begin{lemma:dd_maps_to_unique-restated}
\textit{We restate this lemma, first presented in \secref{sec:theory}, for convenience.} Let the distribution $Q$ over random variables
$X,Y,D$ satisfy Assumptions A.1--A.4. Then for all $x,x^\prime$ in anchor sub-domain, i.e., $ x, x^\prime \in A_y$ 
for a given label $y\in \out$, we have $f(x) = f(x^\prime)$.
Further, for any $y \in \out$, if $x \in A_y , x^\prime \notin A_y$, 
then $f(x) \neq f(x^\prime)$. 
\end{lemma:dd_maps_to_unique-restated}

\begin{proof}[Proof of Lemma \ref{lemma:dd_maps_to_unique}]
Let distribution $Q$ over random variables $X,Y,D$ satisfy Assumptions A.1-A.4.
Recall $f: \mathbb{R}^p \to \mathbb{R}^r$ is a vector-valued oracle function such that $[f(x)]_{d} = q(d|X = x)$ for all $x \in \textrm{supp}_Q(X)$.
Also let us recall that $\bQ_{D|Y} \textrm{ is defined as an } r \times k \textrm{ matrix whose elements } [\bQ_{D|Y}]_{i,j} = Q(D=i|Y=j), \textrm{ and each column is a conditional distribution over the domains given a label}$. It has linearly independent columns by Lemma \ref{lemma:P_d_y-linearly-independent}.
\smallskip

First recognize that \update{for all} $d \in \mathcal{R}, x \in \mathcal{X}$ \update{such that} $q(x) > 0$,

\begin{align*}[f(x)]_d = q(d|X = x) &= \sum\limits_{y'' \in \out} q(d, y''|X = x) . \\
&= \sum\limits_{y'' \in \out} q(d| Y = y'', X = x) q(y'' | X = x).
\end{align*}

Using Lemma \ref{lemma:q_d_x_y}, $q(d | X = x, Y = y) = q(d | Y = y)$. \update{We apply this property.}

\begin{align*}
[f(x)]_d = q(d|X = x) &= \sum\limits_{y'' \in \out}  q( d | Y = y'') q(y'' | X = x).
\end{align*}

\update{Then we can} write $f(x) = \sum\limits_{y'' \in \out}   q(y''|X = x) \bQ_{D|Y}[\boldsymbol :, y'']$, where $\bQ_{D|Y}[\boldsymbol :, y'']$ is the $y''$th column of $\bQ_{D|Y}$.
Now we could also rewrite $f(x) = \bQ_{D|Y} \left[ Q(Y = 1 | X = x) \; ... \; Q(Y = k | X = x)\right]^\top$.

We now prove two key components of the lemma.
Let $y \in \out$.
Let $x \in A_y$ \update{such that} $q(x) > 0$.

\paragraph{Points in same anchor sub-domain map together.}
    
    Let $x' \in A_y$ such that $q(x') > 0$. We now seek to show that $f(x) = f(x')$.
Recall that $x, x' \in A_y$. By Lemma \ref{lemma:onehot}, $q(y | X = x) = q(y | X = x') = 1$. Also by lemma \ref{lemma:onehot}, $\forall y'' \in \out \setminus \{ y \}, q(y'' | X = x) = q(y'' | X = x') = 0$.
Then for all $y'' \in \out$, $q(y'' | X = x) = q(y'' | X = x')$.

Therefore, $\forall d \in \mathcal{R}$,

\begin{align*}[f(x)]_d = q(d | X = x) &= \sum\limits_{y'' \in \out}  q(d | Y = y'') q(y'' | X = x) \\ &= \sum\limits_{y'' \in \out}  q(d|Y = y'') q(y'' | X = x') \\ &= q(d | X = x') = [f(x')]_d.
\end{align*}
Then $f(x) = f(x')$.

\paragraph{Point outside of the anchor sub-domain does not map with points in the anchor sub-domain}.
Let $x_0 \notin A_y$ such that $q(x_0) > 0$. We now seek to show that $f(x) \neq f(x_0)$. Because $x_0 \notin A_y$ with $q(x_0) > 0$, and because $A_y$ \update{contains all $x$ such that $q(x) > 0$, $q(y|X = x)= 1$, and $q(y'|X = x) = 0$ for all $y' \in \out \setminus \{y\}$,} then by Lemma \ref{lemma:onehot}, it must be that one of the following cases is true:

\begin{itemize}
    \item \textbf{Case 1: }$q(y|X = x_0) \neq 1$
    
    \item \textbf{Case 2: }$q(y'|X = x_0) > 0$ for some $y' \in \out \setminus \{y\}$.
\end{itemize}

In all circumstances, \update{there exists some} $y'' \in \out : q(y''|x_0) \neq \update{q(y''|x)}$. Then,

\begin{align*}
\left[ Q(Y = 1 | X = x)  ...  Q(Y = k | X = x)\right]^\top \neq \left[ Q(Y = 1 | X = x_0)  ...  Q(Y = k | X = x_0)\right]^\top.
\end{align*}

Because $\bQ_{D|Y}$ has linearly independent columns (shown in Lemma \ref{lemma:P_d_y-linearly-independent}), we now know that 
\begin{align*}
    f(x) &= \bQ_{D|Y} \left[ Q(Y = 1 | X = x) \; ... \; Q(Y = k | X = x)\right]^\top \\
& \neq \bQ_{D|Y} \left[ Q(Y = 1 | X = x_0) \; ...\;  Q(Y = k | X = x_0)\right]^\top = f(x_0) \,. \\
\end{align*}

So $f(x) \neq f(x_0)$.

\end{proof}

\newpage

\section{Proof of Theorem \ref{thm:anchor-subdomain-continuous}}
\label{sec:theorem-2-proof_app}

\newtheorem*{thm:anchor-subdomain-continuous-restated}{Theorem \ref{thm:anchor-subdomain-continuous}}
\begin{thm:anchor-subdomain-continuous-restated}
\textit{We restate this theorem, first presented in \secref{sec:theory}, for convenience.} Let the distribution $Q$ over random variables
$X,Y,D$ satisfy Assumptions A.1--A.4. Assuming access to the joint distribution $q(x, d)$, \update{and knowledge of the number of true classes $k$}, 
we show that the following quantities are identifiable: 
(i) $\bQ_{Y|D}$, (ii) $q(y|X = x)\,$,
for all  $x \in \inpt$ that lies 
in the support (i.e. $q(x) > 0$); 
and (iii) $q(y|X = x,D = d)\,$, 
for all $x \in \inpt$ and $d \in \calR$
such that  $q(x,d) > 0$. 
\end{thm:anchor-subdomain-continuous-restated}

\begin{proof}[Proof of Theorem \ref{thm:anchor-subdomain-continuous}]

Let distribution $Q$ over random variables $X,Y,D$ satisfy Assumptions A.1-A.4.

Recall $f: \mathcal{X} \to \mathbb{R}^r$ is a vector-valued oracle function such that $[f(x)]_{d} = q(d|X = x)$ for all $x \in \textrm{supp}_Q(X)$. It is known because we know the marginal $q(x, d)$. 
Let $y \in \out$.
Then by Lemma \ref{lemma:dd_maps_to_unique}, \update{$f$} sends every $x \in A_y$ (and no other $x \notin A_y$) to the same value. We overload notation to denote this as $f(A_y)$.
Then $Q(f(X) = f(A_y)) = Q(X \in A_y) \ge \epsilon$.
Then in the marginal distribution of $f(X)$ with respect to distribution $Q$, there is a distinct point mass on each $f(A_y)$, with mass at least $\epsilon$.

Because we know the marginal $q(x,d)$, we know the marginal $q(x)$, so we can obtain the distribution of $f(X)$ with respect to distribution $Q$.
If we analyze the marginal distribution of $f(X)$ with respect to distribution $Q$, and recover all point masses with mass at least $\epsilon$, we can recover no more than $\mathcal{O}\left(\nicefrac{1}{\epsilon}\right)$ such points. We set $m \in \mathbb{Z}^+$ so that the number of points we recovered is $m - 1$.

We denote a mapping $\psi : \mathbb{R}^r \to [m]$. This mapping sends each value of $f(x)$ corresponding to a point mass with mass at least $\epsilon$ to a unique index in $\{1, ..., m - 1\}$. It sends any other value in $\mathbb{R}^p$ to $m$. We note that the ordering of the point masses might have $(m-1)!$ permutations.

Notice that the point mass on each $f(A_y)$ must be recovered among these $m - 1$ masses. 
Recall that \update{for all} $y \in \out$, $ f(x) = f(A_y)$ \update{if and only if } $x \in A_y$.
Then for all $y \in \out$, $\psi(f(\update{x})) = \psi(f(A_y))$ \update{if and only if } $\update{x} \in A_y$, because $\psi$ does not send any other value in $\mathbb{R}^r$ besides $f(A_y)$ to $\psi(f(A_y))$.

\smallskip

For convenience, we now define a mapping $c: \mathcal{X} \to [m]$ such that $c = \psi \circ f$.
We will also abuse notation here to denote $c(A_y) = \psi(f(A_y))$.
Then $c(X)$ is a discrete, finite random variable that takes values in $[m]$. As $c$ is a pushforward function on $X$, $c(X)$ satisfies the label shift assumption because $X$ does \update{(i.e., when conditioning on $Y$, the distribution of $c(X)$ is domain-invariant)}.

We might now define a matrix $\bQ_{c(X)|D}$ in which each entry $[\bQ_{c(X)|D}]_{ i, d } = Q(c(X) = i | D = d)$.
\update{We recall that we know the number of true classes $k$. Then we know that there is a (possibly unique) unknown decomposition of the following form:}
\begin{align*}
\update{q(c(X) | d)} &= \update{\sum\limits_{y \in \out} q(c(X) | Y = y, D = d)q(y|D = d)}
\end{align*}

Using the label shift property,
\begin{align*}
\update{q(c(X) | d)} &= \update{\sum\limits_{y \in \out} q(c(X) | Y = y)q(y|D = d)}.
\end{align*}
\update{To express this decomposition in matrix form, we write $\bQ_{c(X)|D} = \bQ_{c(X)|Y} \bQ_{ Y | D }$. Now we make observations about the unknown $\bQ_{c(X)|Y}$.} For all $ y \in \out$,
\begin{align*}
     Q(c(x) = c(A_y) | Y = y) &= Q(X \in A_y | Y = y) > 0 \,. \\ 
Q(c(x) = c(A_y) | Y \neq y) &= Q(X  \in  A_y | Y \neq y) = 0 \,.
\end{align*}

Then for each $y \in \out$, the row of $\bQ_{c(X)|Y}$ with row index $c(A_y)$ is positive in the $y$th column, and zero everywhere else.
Restated, for each $y \in \out$, there is some row with positive entry exactly in $y$th column. This is precisely the anchor word assumption for a discrete, finite random variable.
\update{We already know that $\bQ_{ Y | D }$ is full row-rank, so because $\bQ_{c(X)|Y}$ satisfies the anchor word assumption, we can identify $\bQ_{ Y | D }$ up to permutation of rows by Theorem \ref{thm:separability-discrete}. In other words, when we set the constraint that the recovered $\bQ_{c(X)|Y}$ must have $k$ columns and satisfy anchor word and the recovered $\bQ_{ Y | D }$ must have $k$ rows and be full row-rank, any solution to the decomposition $\bQ_{c(X)|D} = \bQ_{c(X)|Y} \bQ_{ Y | D }$ must identify the ground truth $\bQ_{ Y | D }$, up to permutation of its rows.}

\smallskip

\end{proof}

\section{Minimizing Cross-Entropy Loss yields Domain Discriminator}
\label{sec:cross-entropy-proof}
In this section we reason about our choice of cross entropy loss to estimate the domain discriminator. We show that in population, when optimizing over a sufficiently powerful class of functions, the minimizer of the cross entropy loss is the conditional distribution over domains given input $X$. 

Define the vector-valued function $z : \mathcal{R} \to \mathbb{R}^r$ such that  $z(D)$ is a one-hot random vector of length $r$, such that $[z(D)]_i = 1, \textrm{ iff } D = i$.
Then we write the cross-entropy objective in expectation over the input random variable $X$ and target random variable $D$ as: 

\begin{align*}
    \cL &= \expect_{(X,D) \sim Q}\left[ - \sum_{i=1}^{i=r} [z(D)]_i\log([f(X)]_i)\right] \\
        &= \expect_{X}\expect_{D|X}\left[ - \sum_{i=1}^{i=r} [z(D)]_i\log([f(X)]_i)\right]
\end{align*}

where the second line follows by splitting the expectations using the law of iterative expectations. For ease of notation we denote conditioning on $X = x$ as just conditioning on $X$.
We now move the inner expectation into
the summation over the domain random variable and pull the terms that do not depend on $D$ outside this inner expectation: 

\begin{align*}
    \cL &= \expect_{X}\left[- \sum_{i=1}^{i=r} \expect_{D|X} \left[[z(D)]_i\log([f(X)]_i)\right]\right] \\
        &= \expect_{X}\left[- \sum_{i=1}^{i=r} \log([f(X)]_i) \expect_{D|X} [[z(D)]_i]\right] 
\end{align*}

Since $[Z(D)]_i$ is a random variable which is 1 when $D=i$ conditioned on $X$, and 0 otherwise, the inner expectation can be simplified as follows: 
$\expect_{D|X} [[z(D)]_i] = 1 \times Q(D=i|X) + 0 \times (1-Q(D=i|X))$, giving 

\begin{align*}
    \cL  &= \expect_{X}\left[- \sum_{i=1}^{i=r} \log([f(X)]_i)Q(D=i|X)\right]
\end{align*}

In order to learn a distribution, we constrain our search space to the subset of functions from $\update{\mathbb{R}}^p \to [0,1]^r$ that project to a simplex $\Delta^{r-1}$. We now look to find the minimizer of the above equation from within this subset for each value of $X$. This corresponds to minimizing each term that contributes to the outer expectation over $X$, leading to the overall minimizer of the cross-entropy loss. 

We formulate this objective along with the Lagrange constraint modeling the sum of components of $f(X)$ adding to 1. Note that we do not add the constraint of each component of $f(X)$ lying between 0 and 1, but it is easy to see that the resulting solution satisfies this constraint.  

$$J = \min_{[f(X)]_1 ... [f(X)]_r}- \sum_{i=1}^{i=r} \log([f(X)]_i)Q(D=i|X) + \lambda \left(\sum_{i=1}^{i=r}[f(X)]_i - 1 \right) $$

Setting partial derivative with respect to $[f(X)]_i$ to $0$, we get  $-\cfrac{Q(D=i|X)}{[f^{\star}(X)]_i} + \lambda = 0$
and $[f^{\star}(X)]_i = \frac{1}{\lambda} Q(D=i|X)$, where we use $f^{\star}$ to denote the minimizer. 

The last piece is to solve for the Lagrange multiplier $\lambda$. This can be achieved by applying Karush-Kuhn-Tucker (KKT) conditions which suppose that the optimal solution lies on the constraint surface. This gives, $\sum_{i=1}^{i=r} [f^{\star}(X)]_i = 1$ which implies $\sum_{i=1}^{i=r} \frac{1}{\lambda} Q(D=i|X) = 1$. Since sum over domains $Q(D|X) = 1$, we get $ \lambda = 1$.

Plugging this in, we finally get $[f^{\star}(X)]_i = Q(D=i|X)$. Thus, the optimal $f^*$ obtained by minimizing the cross entropy objective will in fact recover the oracle domain discriminator. As mentioned, it is clear that the resulting solution satisfies the required conditions of valid probability distributions. 

\section{Additional Experimental Details}
\label{sec:appendix-experimental-details}
Our code is available at
\url{https://github.com/acmi-lab/Latent-Label-Shift-DDFA}.
Here we present the full generation procedure for semisynthetic example problems, and discuss the parameters.

\begin{enumerate}
    \item Choose a Dirichlet concentration parameter $\alpha > 0$, maximum condition number $\kappa \ge 1$ (with respect to 2-norm), and domain count $r \ge k$.
    \item For each $y \in [k]$, sample $p_d(y) \sim \textrm{Dir}(\frac{\alpha}{k} \mathbf{1}_k)$.
    \item Populate the matrix $\bQ_{Y|D}$ with the computed $p_d(y)$s. If $\textrm{cond}(\update{\bQ_{Y|D}}) > k$, return to step 2 and re-sample.
    \item Distribute examples across domains according to $\bQ_{Y|D}$, for each of train, test, and valid sets. This procedure entails creating a quota number of examples for each (class, domain) pair, and drawing datapoints without replacement to fill each quota. We must discard excess examples from some classes in the dataset due to class imbalance in the \update{$\bQ_{Y|D}$} matrix. Due to integral rounding, domains may be \textit{slightly} imbalanced.
    \item Conceal true class information and return $(x_i,d_i)$ pairs.
\end{enumerate}

It is important to note the role of $\kappa$ and $\alpha$ in the above formulation. Although they are unknown parameters to the classification algorithm, they affect the sparsity of the $\bQ_{Y|D}$ and difficulty of the problem. Small $\alpha$ encourages high sparsity in $p_d(y)$, and large $\alpha$ causes $p_d(y)$ to tend towards a uniform distribution. \update{We observe an example of the effects of $\alpha$ in \figref{fig:matrices_sampled}.} $\kappa$ has a strong effect on the difficulty of the problem. Consider the case when $k=2$. When $\kappa = 1$, the only potential $\bQ_{Y|D}$ matrices are $\mathbf{I}_2$ up to row permutation (which means that domains and classes are exactly correlated, so the domain indicates the class and the problem is supervised). In the other limit, if $\kappa \to +\infty$, we may generate $\bQ_{Y|D}$ matrices that are nearly singular, breaking needed assumptions for domain discriminator output to uniquely identify true class of anchor subdomains. $\kappa$ also helps control the class imbalance (if a row of $\bQ_{Y|D}$ is small, indicating that the class is heavily under-represented across all domains, the condition number will increase).

\begin{figure}[t]

  \centering
\subfigure[$\alpha : 0.5, \kappa : 3$]{ 
    {} {}$\begin{bmatrix}
            0.17 & 0.65 \\
            0.83 & 0.35
        \end{bmatrix}${} {}}

\subfigure[$\alpha : 3, \kappa : 5$]{ 
    {} {}$\begin{bmatrix}
            0.37 & 0.06 \\
            0.63 & 0.94 
        \end{bmatrix}${} {}}
\quad
\subfigure[$\alpha : 10, \kappa : 7$]{ 
    {} {}$\begin{bmatrix}
            0.42 & 0.25 \\
            0.58 & 0.75
        \end{bmatrix}${} {}}
  \caption{Example $\bQ_{Y|D}$ matrices sampled for FieldGuide-2 with 2 classes and 2 domains. Each column represents the distribution across classes $\update{p_d(y)}$ for a given domain. At small $\alpha$, each $\update{p_d(y)}$ is likelier to be ``sparse'' (\update{our definition is an informal one meaning not that there are many zero entries, but instead that the distribution is heavily concentrated in a few classes}). \update{At} large $\alpha$, $\update{p_d(y)}$ tends toward a uniform distribution in which classes are represented evenly.}
   \label{fig:matrices_sampled}
\end{figure}

\subsection{FieldGuide-2 and FieldGuide-28 Datasets}

The dataset and description is available at \url{https://sites.google.com/view/fgvc6/competitions/butterflies-moths-2019}. From this data we create two datasets FieldGuide-2 and FieldGuide-28. For FieldGuide-28 we select the 28 classes which have 1000 datapoints in the training file. Since the test set provided in the website does not have annotations, we manually create a test set by sampling 200 datapoints from training file of each of the 28 classes. Therefore, we finally have 22400 \update{training} points and 5600 testing points. The FieldGuide-2 dataset is created by considering two classes from the created FieldGuide-28 dataset. 
 
 \subsection{Hyperparameters and Implementation Details: SCAN baseline}

In all cases, we initialize the SCAN \citep{SCAN} network with the clustering head attached, sample data according to the $\bQ_{Y|D}$ matrix, and predict classes.
With the Hungarian algorithm, implemented in \citep{crouse2016implementing, 2020SciPy-NMeth}, we compute the highest true accuracy among any permutation of these labels (denoted ``Test acc'').

\begin{itemize}
    \item \textbf{CIFAR-10 and CIFAR-20 Datasets \citep{CIFAR} }
    We use  ResNet-18 \citep{he2016deep} backbone with weights trained by SCAN-loss and obtained from the SCAN repo  \url{https://github.com/wvangansbeke/Unsupervised-Classification}.
    We use the same transforms present in the repo for test data.
    
    \item \textbf{ImageNet-50 Dataset \citep{imagenet}}
    We use ResNet-50 backbone with weights trained by SCAN-loss and obtained from the SCAN repo.
    We use the same transforms present in the repo for test data.
    
    \item \textbf{FieldGuide-2 and FieldGuide-28 Datasets}
    For each of the two datasets, we pretrain a different SCAN baseline network (including pretext and SCAN-loss steps) on all available data from the dataset. The backbone for each is ResNet-18.
    For training the pretext task, we use the same transform strategy used in the repo for CIFAR-10 data (with mean and std values as computed on the FieldGuide-28 dataset, and crop size 224). For training SCAN, we resize the smallest image dimension to 256, perform a random horizontal flip and random crop to size 224. We also normalize. For validation we resize smallest image dimension to 256, center crop to 224, and normalize. 

    Hyperparameters
    used in training SCAN representations on both instances of the FieldGuide dataset were chosen as follows: starting with the recommended choices of hyperparameters for ImageNet (as was present in the SCAN repo), we made minimal changes to these only to avoid model degeneracy (training loss collapse).
\end{itemize}

\subsection{Hyperparameters and Implementation Details: DDFA (RI)}

This is the DDFA procedure with random initialization of the domain discriminator. The bulk of this procedure is described in Section \ref{sec:exp}, but for completeness we reiterate here.

We train \update{ResNet-50} \citep{he2016deep} (with random initialization and added dropout) based on the implementation from \url{https://github.com/kuangliu/pytorch-cifar} on images $x_i$ with domain indices $d_i$ as the label, choose best iteration by valid loss, pass all training and validation data through $\hat{f}$, and cluster pushforward predictions $\smash[]{\hat{f}(x_i)}$ into $m \ge k$ clusters with Faiss K-Means \citep{johnson2019billion}. We compute the $\smash[]{\hat{\bQ}_{c(X)|D}}$ matrix and run NMF to obtain $\smash[]{\hat{\bQ}_{c(X)|Y}}$, $\smash[]{\hat{\bQ}_{Y|D}}$. To make columns sum to 1, we normalize columns of $\smash[]{\hat{\bQ}_{c(X)|Y}}$, multiply each column's normalization coefficient over the corresponding row of $\smash[]{\hat{\bQ}_{Y|D}}$ (to preserve correctness of the decomposition), and then normalize columns of $\smash[]{\hat{\bQ}_{Y|D}}$.

Some NMF algorithms only output solutions satisfying the anchor word property \citep{arora-nmf-provably,kumar2013fast, SPA}. We found the strict requirement of an exact anchor word solution to lead to low noise tolerance. We therefore use the Sklearn implementation of standard NMF \citep{SklearnNMFImpl-01, tan2012automatic, scikit-learn}.

We predict class labels with Algorithm $\ref{alg:2}$. With the Hungarian algorithm, implemented in \citep{crouse2016implementing, 2020SciPy-NMeth}, we compute the highest true accuracy among any permutation of these labels (denoted ``Test acc''). With the same permutation, we reorder rows of \update{$\smash[]{\hat{Q}_{Y|D}}$}, then compute the average absolute difference between corresponding entries of $\smash[]{\hat{\bQ}_{Y|D}}$ and $\bQ_{Y|D} $ (denoted ``$\bQ_{Y|D}$ err'').

In order to make hyperparameter choices for final experiments, such as the choice of the NMF solver, clustering algorithm, and learning rate, we primarily consulted CIFAR-10 and CINIC-10 (similar to an extension of CIFAR-10) \citep{cinic10} final test task accuracy, and validation loss on other datasets (likely leading to an overfitting of our hyperparameter choices to CIFAR-10 and associated tasks). We applied the intuitions developed on these datasets when choosing hyperparameters for other datasets, instead of performing a test accuracy-driven sweep for each other dataset. Final runs used the following fixed hyperparameters:

\textbf{Common Hyperparameters {} {}}

\begin{itemize}

\item Hardware: A single NVIDIA RTX A6000 GPU was used for each experiment (on this hardware, trial length varies < 1 hour to ~24 hours, depending on the dataset).

\item Architecture: ResNet-50 with added dropout

\item Faiss KMeans number of iterations (niter): 100

\item Faiss Kmeans number of clustering redos (nredo): 5

\item Learning Rate: 0.001

\item Learning Rate Decay: Exponential, parameter 0.97

\item SKlearn NMF initialization: random

\end{itemize}

\textbf{Dataset-Specific Hyperparameters {} {}}

\begin{itemize}
    \item \textbf{CIFAR-10 Dataset}
    Training Epochs: 100. 
    Number of Clusters ($m$): 30
    
    \item \textbf{CIFAR-20 Dataset}
    Training Epochs: 100.
    Number of Clusters ($m$): 60
    
    \item \textbf{ImageNet-50 Dataset}
    DDFA (RI) was not evaluated for this dataset due to poor performance of the domain discriminator without an appropriate pre-seed in early trials.
    
    \item \textbf{FieldGuide-2 Dataset}
    \update{Training Epochs: 100}.
    \update{Number of Clusters ($m$): 10}
    
    \item \textbf{FieldGuide-28 Dataset}
    DDFA (RI) was not evaluated for this dataset due to poor performance of the domain discriminator without an appropriate pre-seed in early trials.
\end{itemize}

\subsection{Hyperparameters and Implementation Details: DDFA (SI) and DDFA (SPI)}

This is the DDFA procedure with SCAN initialization of the domain discriminator. \update{DDFA (SI) uses the SCAN pretext + SCAN loss pretraining steps, while DDFA (SPI) uses only the SCAN pretext step.}

The procedure is identical to the standard DDFA procedure, except that SCAN \citep{SCAN} pre-trained weights or SCAN \citep{SCAN} contrastive pre-text weights are used to initialize the domain discriminator before it is fine-tuned on the domain discrimination task. Hyperparameters used also differ.
When SCAN pretrained weights are available, we use those. When they are not, we train SCAN ourselves.

Like SCAN (RI), we used CIFAR-10 
and CINIC-10 final test accuracy to choose hyperparameters and make algorithm decisions. For other datasets, we consulted only validation domain discrimination loss. One exception to this rule was that preliminary low final DDFA (SI) performance on FieldGuide suggested that we should focus on instead evaluating DDFA (SPI) to avoid allowing the SCAN failure mode to negatively affect the domain discriminator pretrain representation. Final evaluation runs used the following fixed hyperparameters:

\textbf{Common Hyperparameters {} {}}

\begin{itemize}

\item Hardware: A single NVIDIA RTX A6000 GPU was used for each experiment (on this hardware, trial length varies < 1 hour to ~24 hours, depending on the dataset).

\item
Faiss KMeans number of iterations (niter): 100

\item
Faiss Kmeans number of clustering redos (nredo): 5

\item
Learning Rate: 0.00001

\item
Learning Rate Decay: Exponential, parameter 0.97

\item
SKlearn NMF initialization: random

\end{itemize}

\textbf{Dataset-Specific Hyperparameters {} {}}

\begin{itemize}
    \item \textbf{CIFAR-10 Dataset}
    
    Architecture: ResNet-18
    
    Pre-seed: Weights trained with SCAN pretext and SCAN-loss on entirety of CIFAR-10 (from SCAN repo).
    
    Training Epochs: 25
    
    Number of Clusters ($m$): 10
    
    Transforms used: Same as SCAN repo.
    
    \item \textbf{CIFAR-20 Dataset}
    
    Architecture: ResNet-18
    
    Pre-seed: Weights trained with SCAN pretext and SCAN-loss on entirety of CIFAR-20 (from SCAN repo).
    
    Training Epochs: 25
    
    Number of Clusters ($m$): 20
    
    Transforms used: Same as SCAN repo.
    
    \item \textbf{ImageNet-50 Dataset}
    
    Architecture: ResNet-50
    
    Pre-seed: Weights trained with SCAN pretext and SCAN-loss on entirety of ImageNet-50 (from SCAN repo).
    
    Training Epochs: 25
    
    Number of Clusters ($m$): 50
    
    Transforms used: Same as SCAN repo.
    
    \item \textbf{FieldGuide-2 Dataset}
    
    Architecture: ResNet-18
    
    Pre-seed: Weights trained with SCAN pretext on entirety of FieldGuide-2 (trained by us).
    
    Training Epochs: 30
    
    Number of Clusters ($m$): 2
    
    Transforms used for pretext: Same strategy as CIFAR-10 in SCAN repo with appropriate mean, std, and crop size 224.
    
    Transform used for SCAN: Resize to 256, Random horizontal flip, Random crop to 224, normalize
    
    Learning rate used for SCAN: 0.001 (other hyperparameters were same as in SCAN repo for CIFAR-10) 
    
    \item \textbf{FieldGuide-28 Dataset}
    
    Architecture: ResNet-18
    
    Pre-seed: Weights trained with SCAN pretext on entirety of FieldGuide-28 (trained by us).
    
    Training Epochs: 60
    
    Number of Clusters ($m$): 28
    
    Transforms used for pretext: Same strategy as CIFAR-10 in SCAN repo with appropriate mean, std, and crop size 224.
    
    Transform used for SCAN: Resize to 256, Random horizontal flip, Random crop to 224, normalize
    
    Learning rate used for SCAN: 0.01 (other hyperparameters were same as in SCAN repo for CIFAR-10)

\end{itemize}

\newpage

\section{Additional Experimental Results}
\label{sec:appendix-experimental-results}
Here we present additional experimental results. We also investigated evaluations on the Waterbirds dataset, but although DDFA showed some reasonable results, we did not find SCAN hyperparameters that lead to a successful SCAN baseline. Accordingly, we do not include these results as they do not present a reasonable comparison.

\begin{table}[ht]
        \caption{\emph{Results on CIFAR-10}. Each entry is produced with the averaged result of 5 different random seeds, formatted as mean $\pm$ standard deviation. With DDFA (RI) we refer to DDFA with randomly initialized backbone.  With DDFA (SI) we refer to DDFA's backbone initialized with SCAN. Note that in DDFA (SI), we do not leverage SCAN for clustering. $\alpha$ is the Dirichlet parameter used for generating label marginals in each domain, $\kappa$ is the maximum allowed condition number of the generated $\bQ_{Y|D}$ matrix, $r$ is number of domains. \update{``Test acc'' is classification accuracy, under the best permutation of the recovered classes, and ``$\bQ_{Y|D}$ err'' is the average entry-wise absolute error in the recovered $\bQ_{Y|D}$.}}
    \centering
    \scalebox{0.78}{
    \begin{tabular}{llllllll}
        \toprule
        \multirow{2}{*}{r} & \multirow{2}{*}{Approaches}    &\multicolumn{2}{c}{$\alpha: 0.5, \; \kappa: 4$} & \multicolumn{2}{c}{$\alpha: 3, \; \kappa: 4$} & \multicolumn{2}{c}{$\alpha: 10, \; \kappa: 8$} \\
        \cmidrule(lr){3-4} \cmidrule(lr){5-6} \cmidrule(lr){7-8}
         & & Test acc & $\bQ_{Y|D}$ err & Test acc & $\bQ_{Y|D}$ err & Test acc & $\bQ_{Y|D}$ err \\

\midrule
\multirow{2}{*}{10} &SCAN & 0.810 $\pm$ 0.006  & 0.145 $\pm$ 0.011  & \textbf{0.820} $\pm$ 0.006  & 0.129 $\pm$ 0.013  & \textbf{0.817} $\pm$ 0.006  & 0.085 $\pm$ 0.009  \\
&DDFA (RI) & 0.707 $\pm$ 0.052  & 0.042 $\pm$ 0.008  & 0.586 $\pm$ 0.055  & 0.046 $\pm$ 0.010  & 0.316 $\pm$ 0.036  & 0.069 $\pm$ 0.005  \\
&DDFA (SI) & \textbf{0.893} $\pm$ 0.043  & \textbf{0.021} $\pm$ 0.004  & 0.795 $\pm$ 0.048  & \textbf{0.037} $\pm$ 0.006  & 0.634 $\pm$ 0.027  & \textbf{0.051} $\pm$ 0.003  \\
\midrule
\multirow{2}{*}{15} &SCAN & 0.822 $\pm$ 0.015  & 0.152 $\pm$ 0.012  & 0.820 $\pm$ 0.013  & 0.123 $\pm$ 0.005  & \textbf{0.817} $\pm$ 0.007  & 0.085 $\pm$ 0.006  \\
&DDFA (RI) & 0.724 $\pm$ 0.072  & 0.044 $\pm$ 0.021  & 0.569 $\pm$ 0.042  & 0.049 $\pm$ 0.010  & 0.279 $\pm$ 0.085  & 0.084 $\pm$ 0.032  \\
&DDFA (SI) & \textbf{0.925} $\pm$ 0.061  & \textbf{0.018} $\pm$ 0.011  & \textbf{0.877} $\pm$ 0.049  & \textbf{0.021} $\pm$ 0.008  & 0.732 $\pm$ 0.064  & \textbf{0.035} $\pm$ 0.007  \\
\midrule
\multirow{2}{*}{20} &SCAN & 0.819 $\pm$ 0.015  & 0.149 $\pm$ 0.009  & 0.817 $\pm$ 0.009  & 0.119 $\pm$ 0.007  & \textbf{0.807} $\pm$ 0.009  & 0.081 $\pm$ 0.006  \\
&DDFA (RI) & 0.725 $\pm$ 0.055  & 0.042 $\pm$ 0.011  & 0.507 $\pm$ 0.036  & 0.054 $\pm$ 0.006  & 0.257 $\pm$ 0.052  & 0.072 $\pm$ 0.009  \\
&DDFA (SI) & \textbf{0.898} $\pm$ 0.072  & \textbf{0.026} $\pm$ 0.010  & \textbf{0.889} $\pm$ 0.034  & \textbf{0.022} $\pm$ 0.005  & 0.765 $\pm$ 0.042  & \textbf{0.031} $\pm$ 0.006  \\
\midrule
\multirow{2}{*}{25} &SCAN & 0.801 $\pm$ 0.016  & 0.147 $\pm$ 0.017  & 0.819 $\pm$ 0.008  & 0.114 $\pm$ 0.003  & \textbf{0.816} $\pm$ 0.013  & 0.083 $\pm$ 0.005  \\
&DDFA (RI) & 0.709 $\pm$ 0.105  & 0.051 $\pm$ 0.027  & 0.552 $\pm$ 0.039  & 0.047 $\pm$ 0.008  & 0.274 $\pm$ 0.061  & 0.070 $\pm$ 0.009  \\
&DDFA (SI) & \textbf{0.941} $\pm$ 0.048  & \textbf{0.019} $\pm$ 0.007  & \textbf{0.912} $\pm$ 0.013  & \textbf{0.017} $\pm$ 0.005  & 0.796 $\pm$ 0.057  & \textbf{0.029} $\pm$ 0.006  \\

        \bottomrule
    \end{tabular}
    }
\end{table}

\begin{table}[h]
    \caption{\emph{Full results on CIFAR-20}. Each entry is produced with the averaged result of 5 different random seeds, formatted as mean $\pm$ standard deviation. With DDFA (RI) we refer to DDFA with randomly initialized backbone.  With DDFA (SI) we refer to DDFA's backbone initialized with SCAN. Note that in DDFA (SI), we do not leverage SCAN for clustering. $\alpha$ is the Dirichlet parameter used for generating label marginals in each domain, $\kappa$ is the maximum allowed condition number of the generated $\bQ_{Y|D}$ matrix, $r$ is number of domains.}
    \vspace{5pt}
    \label{table:cifar20_appendix}
    \centering
    \scalebox{0.78}{
    \begin{tabular}{llllllll}
        \toprule
        \multirow{2}{*}{r} & \multirow{2}{*}{Approaches}    &\multicolumn{2}{c}{$\alpha: 0.5, \; \kappa: 8$} & \multicolumn{2}{c}{$\alpha: 3, \; \kappa: 12$} & \multicolumn{2}{c}{$\alpha: 10, \; \kappa: 20$} \\
        \cmidrule(lr){3-4} \cmidrule(lr){5-6} \cmidrule(lr){7-8}
         & & Test acc & $\bQ_{Y|D}$ err & Test acc & $\bQ_{Y|D}$ err & Test acc & $\bQ_{Y|D}$ err \\
\midrule
\multirow{2}{*}{20} &SCAN & 0.445 $\pm$ 0.025  & 0.090 $\pm$ 0.002  & 0.432 $\pm$ 0.018  & 0.081 $\pm$ 0.002  & \textbf{0.438} $\pm$ 0.010  & 0.062 $\pm$ 0.002  \\
&DDFA (RI) & 0.549 $\pm$ 0.081  & 0.038 $\pm$ 0.006  & 0.342 $\pm$ 0.013  & 0.043 $\pm$ 0.004  & 0.203 $\pm$ 0.019  & 0.053 $\pm$ 0.003  \\
&DDFA (SI) & \textbf{0.807} $\pm$ 0.034  & \textbf{0.021} $\pm$ 0.005  & \textbf{0.582} $\pm$ 0.060  & \textbf{0.028} $\pm$ 0.004  & 0.355 $\pm$ 0.108  & \textbf{0.035} $\pm$ 0.002  \\
\midrule
\multirow{2}{*}{25} &SCAN & 0.451 $\pm$ 0.040  & 0.091 $\pm$ 0.002  & 0.457 $\pm$ 0.008  & 0.079 $\pm$ 0.002  & 0.444 $\pm$ 0.018  & 0.061 $\pm$ 0.002  \\
&DDFA (RI) & 0.542 $\pm$ 0.037  & 0.040 $\pm$ 0.002  & 0.283 $\pm$ 0.028  & 0.050 $\pm$ 0.003  & 0.179 $\pm$ 0.008  & 0.053 $\pm$ 0.002  \\
&DDFA (SI) & \textbf{0.851} $\pm$ 0.041  & \textbf{0.017} $\pm$ 0.003  & \textbf{0.667} $\pm$ 0.035  & \textbf{0.024} $\pm$ 0.002  & \textbf{0.495} $\pm$ 0.039  & \textbf{0.031} $\pm$ 0.004  \\
\midrule
\multirow{2}{*}{30} &SCAN & 0.453 $\pm$ 0.010  & 0.088 $\pm$ 0.004  & 0.436 $\pm$ 0.018  & 0.079 $\pm$ 0.003  & 0.439 $\pm$ 0.014  & 0.061 $\pm$ 0.002  \\
&DDFA (RI) & 0.486 $\pm$ 0.078  & 0.046 $\pm$ 0.006  & 0.287 $\pm$ 0.046  & 0.054 $\pm$ 0.006  & 0.123 $\pm$ 0.045  & 0.068 $\pm$ 0.015  \\
&DDFA (SI) & \textbf{0.868} $\pm$ 0.044  & \textbf{0.016} $\pm$ 0.004  & \textbf{0.687} $\pm$ 0.043  & \textbf{0.024} $\pm$ 0.004  & \textbf{0.517} $\pm$ 0.022  & \textbf{0.032} $\pm$ 0.002  \\

        \bottomrule
    \end{tabular}
    }
\end{table}  

\begin{table}[ht]
    \caption{\emph{Results on ImageNet-50}. Each entry is produced with the averaged result of 5 different random seeds, formatted as mean $\pm$ standard deviation. With DDFA (SI) we refer to DDFA's backbone initialized with SCAN. Note that in DDFA (SI), we do not leverage SCAN for clustering. $\alpha$ is the Dirichlet parameter used for generating label marginals in each domain, $\kappa$ is the maximum allowed condition number of the generated $\bQ_{Y|D}$ matrix, $r$ is number of domains. \update{``Test acc'' is classification accuracy, under the best permutation of the recovered classes, and ``$\bQ_{Y|D}$ err'' is the average entry-wise absolute error in the recovered $\bQ_{Y|D}$.}}
    \centering
    \scalebox{0.78}{
    \begin{tabular}{llllllll}
        \toprule
        \multirow{2}{*}{r} & \multirow{2}{*}{Approaches}    &\multicolumn{2}{c}{$\alpha: 0.5, \; \kappa: 200$} & \multicolumn{2}{c}{$\alpha: 3, \; \kappa: 205$} & \multicolumn{2}{c}{$\alpha: 10, \; \kappa: 210$} \\
        \cmidrule(lr){3-4} \cmidrule(lr){5-6} \cmidrule(lr){7-8}
         & & Test acc & $\bQ_{Y|D}$ err & Test acc & $\bQ_{Y|D}$ err & Test acc & $\bQ_{Y|D}$ err \\

\midrule
\multirow{2}{*}{50} &SCAN & \textbf{0.734} $\pm$ 0.052  & 0.039 $\pm$ 0.000  & \textbf{0.751} $\pm$ 0.028  & 0.037 $\pm$ 0.001  & \textbf{0.736} $\pm$ 0.015  & 0.032 $\pm$ 0.001  \\
&DDFA (SI) & 0.722 $\pm$ 0.058  & \textbf{0.011} $\pm$ 0.003  & 0.594 $\pm$ 0.062  & \textbf{0.015} $\pm$ 0.002  & 0.452 $\pm$ 0.059  & \textbf{0.019} $\pm$ 0.000  \\
\midrule
\multirow{2}{*}{60} &SCAN & 0.747 $\pm$ 0.024  & 0.039 $\pm$ 0.000  & \textbf{0.756} $\pm$ 0.027  & 0.037 $\pm$ 0.000  & \textbf{0.737} $\pm$ 0.011  & 0.032 $\pm$ 0.000  \\
&DDFA (SI) & \textbf{0.800} $\pm$ 0.066  & \textbf{0.010} $\pm$ 0.002  & 0.696 $\pm$ 0.033  & \textbf{0.014} $\pm$ 0.002  & 0.586 $\pm$ 0.054  & \textbf{0.017} $\pm$ 0.002  \\

\bottomrule

    \end{tabular}
    }
\end{table}

\begin{table}[ht]
    \caption{\emph{Full results on FieldGuide-2}. \update{Each entry is produced with the averaged result of 5 different random seeds, formatted as mean $\pm$ standard deviation. With DDFA (RI) we refer to DDFA with randomly initialized backbone. With DDFA (SPI) we refer to DDFA initialized with pretext training adopted by SCAN. Note that in DDFA (SPI), we do not leverage SCAN for clustering. $\alpha$ is the Dirichlet parameter used for generating label marginals in each domain, $\kappa$ is the maximum allowed condition number of the generated $\bQ_{Y|D}$ matrix, $r$ is number of domains.}} \label{table:fieldguide2}
    \centering
    \scalebox{0.78}{
    \begin{tabular}{llllllll}
        \toprule
        \multirow{2}{*}{r} & \multirow{2}{*}{Approaches}    &\multicolumn{2}{c}{$\alpha: 0.5, \; \kappa: 3$} & \multicolumn{2}{c}{$\alpha: 3, \; \kappa: 5$} & \multicolumn{2}{c}{$\alpha: 10, \; \kappa: 7$} \\
        \cmidrule(lr){3-4} \cmidrule(lr){5-6} \cmidrule(lr){7-8}
         & & Test acc & $\bQ_{Y|D}$ err & Test acc & $\bQ_{Y|D}$ err & Test acc & $\bQ_{Y|D}$ err \\

\midrule
\multirow{2}{*}{2} &SCAN & 0.589 $\pm$ 0.006  & 0.880 $\pm$ 0.117  & 0.591 $\pm$ 0.021  & 0.372 $\pm$ 0.160  & 0.601 $\pm$ 0.016  & 0.283 $\pm$ 0.068  \\
&DDFA (SPI) & \textbf{0.947} $\pm$ 0.036  & \textbf{0.084} $\pm$ 0.043  & \textbf{0.725} $\pm$ 0.133  & \textbf{0.150} $\pm$ 0.098  & \textbf{0.676} $\pm$ 0.043  & \textbf{0.222} $\pm$ 0.046  \\
\midrule
\multirow{2}{*}{3} &SCAN & 0.608 $\pm$ 0.022  & 0.706 $\pm$ 0.069  & 0.591 $\pm$ 0.025  & 0.426 $\pm$ 0.155  & 0.585 $\pm$ 0.026  & 0.282 $\pm$ 0.135  \\
&DDFA (SPI) & \textbf{0.922} $\pm$ 0.033  & \textbf{0.086} $\pm$ 0.053  & \textbf{0.818} $\pm$ 0.050  & \textbf{0.152} $\pm$ 0.073  & \textbf{0.619} $\pm$ 0.116  & \textbf{0.264} $\pm$ 0.122  \\
\midrule
\multirow{2}{*}{5} &SCAN & 0.589 $\pm$ 0.026  & 0.640 $\pm$ 0.108  & 0.587 $\pm$ 0.013  & 0.389 $\pm$ 0.190  & 0.586 $\pm$ 0.008  & \textbf{0.226} $\pm$ 0.081  \\
&DDFA (SPI) & \textbf{0.857} $\pm$ 0.069  & \textbf{0.181} $\pm$ 0.116  & \textbf{0.738} $\pm$ 0.137  & \textbf{0.194} $\pm$ 0.128  & \textbf{0.610} $\pm$ 0.102  & 0.253 $\pm$ 0.153  \\
\midrule
\multirow{2}{*}{7} &SCAN & 0.571 $\pm$ 0.023  & 0.679 $\pm$ 0.111  & 0.586 $\pm$ 0.022  & 0.422 $\pm$ 0.058  & 0.581 $\pm$ 0.005  & 0.244 $\pm$ 0.070  \\
&DDFA (SPI) & \textbf{0.848} $\pm$ 0.051  & \textbf{0.223} $\pm$ 0.063  & \textbf{0.779} $\pm$ 0.057  & \textbf{0.166} $\pm$ 0.098  & \textbf{0.645} $\pm$ 0.128  & \textbf{0.218} $\pm$ 0.113  \\
\midrule
\multirow{2}{*}{10} &SCAN & 0.587 $\pm$ 0.018  & 0.716 $\pm$ 0.138  & 0.589 $\pm$ 0.003  & 0.347 $\pm$ 0.043  & 0.590 $\pm$ 0.003  & 0.193 $\pm$ 0.031  \\
&DDFA (SPI) & \textbf{0.886} $\pm$ 0.023  & \textbf{0.162} $\pm$ 0.082  & \textbf{0.744} $\pm$ 0.060  & \textbf{0.194} $\pm$ 0.062  & \textbf{0.606} $\pm$ 0.063  & \textbf{0.172} $\pm$ 0.087  \\

        \bottomrule
    \end{tabular}
    }
\end{table}

\begin{table}[ht]
    \caption{\emph{Results on FieldGuide-28}. Each entry is produced with the averaged result of 5 different random seeds, formatted as mean $\pm$ standard deviation. With DDFA (SPI) we refer to DDFA initialized with pretext training adopted by SCAN. Note that in DDFA (SPI), we do not leverage SCAN for clustering. $\alpha$ is the Dirichlet parameter used for generating label marginals in each domain, $\kappa$ is the maximum allowed condition number of the generated $\bQ_{Y|D}$ matrix, $r$ is number of domains. \update{``Test acc'' is classification accuracy, under the best permutation of the recovered classes, and ``$\bQ_{Y|D}$ err'' is the average entry-wise absolute error in the recovered $\bQ_{Y|D}$.}}
    \label{table:fieldguide28}
    \centering
    \scalebox{0.78}{
    \begin{tabular}{llllllll}
        \toprule
        \multirow{2}{*}{r} & \multirow{2}{*}{Approaches}    &\multicolumn{2}{c}{$\alpha: 0.5, \; \kappa: 12$} & \multicolumn{2}{c}{$\alpha: 3, \; \kappa: 20$} & \multicolumn{2}{c}{$\alpha: 10, \; \kappa: 28$} \\
        \cmidrule(lr){3-4} \cmidrule(lr){5-6} \cmidrule(lr){7-8}
         & & Test acc & $\bQ_{Y|D}$ err & Test acc & $\bQ_{Y|D}$ err & Test acc & $\bQ_{Y|D}$ err \\
\midrule
\multirow{2}{*}{28} &SCAN & 0.257 $\pm$ 0.004  & 0.066 $\pm$ 0.002  & 0.250 $\pm$ 0.010  & 0.061 $\pm$ 0.002  & 0.249 $\pm$ 0.007  & 0.049 $\pm$ 0.002  \\
&DDFA (SPI) & \textbf{0.577} $\pm$ 0.078  & \textbf{0.029} $\pm$ 0.005  & \textbf{0.395} $\pm$ 0.090  & \textbf{0.032} $\pm$ 0.005  & \textbf{0.262} $\pm$ 0.079  & \textbf{0.036} $\pm$ 0.002  \\
\midrule
\multirow{2}{*}{37} &SCAN & 0.255 $\pm$ 0.026  & 0.069 $\pm$ 0.001  & 0.265 $\pm$ 0.016  & 0.061 $\pm$ 0.001  & 0.251 $\pm$ 0.011  & 0.050 $\pm$ 0.001  \\
&DDFA (SPI) & \textbf{0.742} $\pm$ 0.017  & \textbf{0.031} $\pm$ 0.003  & \textbf{0.530} $\pm$ 0.037  & \textbf{0.032} $\pm$ 0.003  & \textbf{0.336} $\pm$ 0.038  & \textbf{0.036} $\pm$ 0.002  \\
\midrule
\multirow{2}{*}{42} &SCAN & 0.271 $\pm$ 0.021  & 0.069 $\pm$ 0.001  & 0.261 $\pm$ 0.011  & 0.061 $\pm$ 0.001  & 0.257 $\pm$ 0.013  & 0.049 $\pm$ 0.002  \\
&DDFA (SPI) & \textbf{0.703} $\pm$ 0.041  & \textbf{0.032} $\pm$ 0.003  & \textbf{0.475} $\pm$ 0.044  & \textbf{0.037} $\pm$ 0.002  & \textbf{0.371} $\pm$ 0.027  & \textbf{0.033} $\pm$ 0.003  \\
\midrule
\multirow{2}{*}{47} &SCAN & 0.248 $\pm$ 0.013  & 0.069 $\pm$ 0.002  & 0.267 $\pm$ 0.018  & 0.061 $\pm$ 0.002  & 0.249 $\pm$ 0.012  & 0.049 $\pm$ 0.001  \\
&DDFA (SPI) & \textbf{0.695} $\pm$ 0.026  & \textbf{0.033} $\pm$ 0.002  & \textbf{0.487} $\pm$ 0.027  & \textbf{0.036} $\pm$ 0.003  & \textbf{0.374} $\pm$ 0.013  & \textbf{0.035} $\pm$ 0.001  \\

        \bottomrule
    \end{tabular}
    }
\end{table}

\newpage

~\newpage

\section{Discussion of Convex Polytope Geometry}
\label{sec:appendix-convex-polytope-geometry}
\begin{figure}[t]

  \centering
  \includegraphics[width=\textwidth]{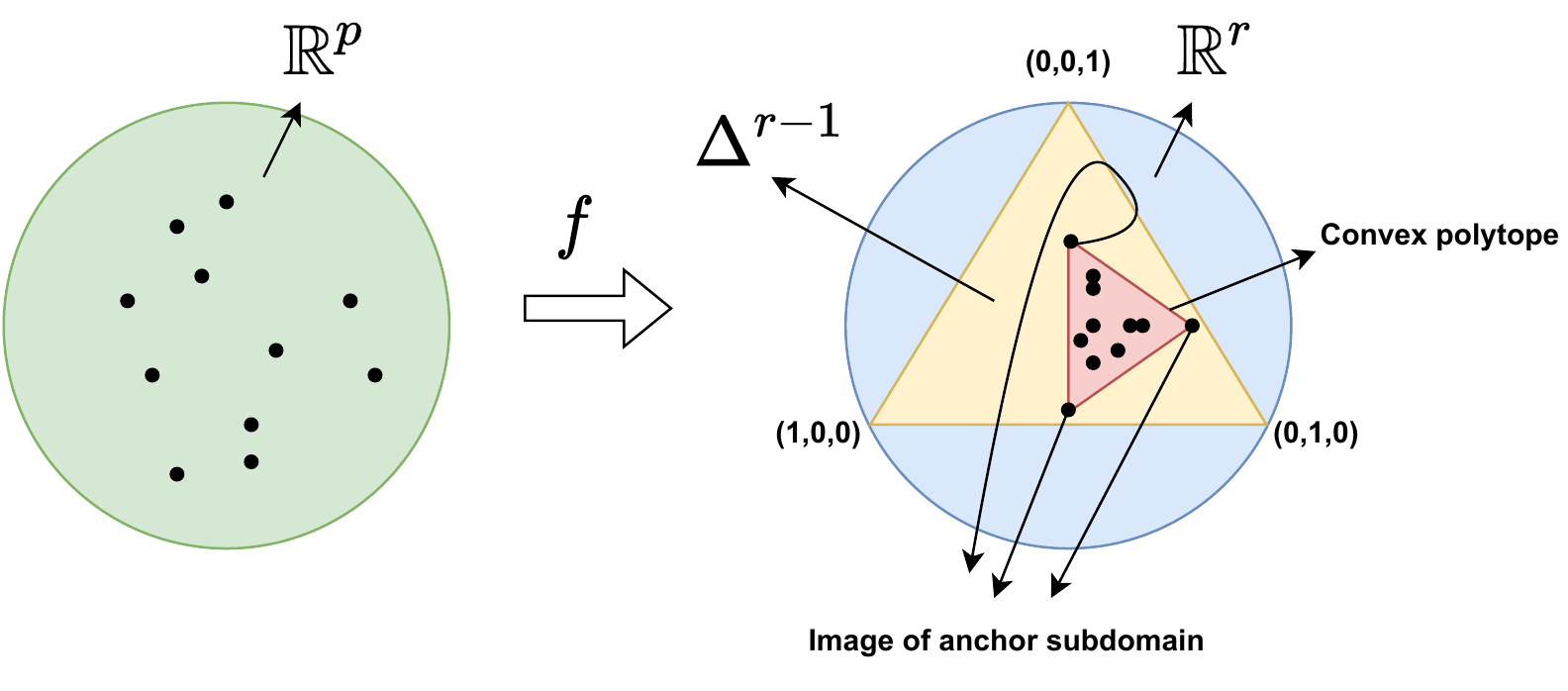}
  \caption{This figure illustrates the case with 3 domains and 3 classes. The oracle domain discriminator maps points from a high-dimensional input space to a $k = 3$ vertex convex polytope (shaded red) embedded in $\Delta^{r-1} , r = 3$ (shaded yellow). The anchor subdomains map to the vertices of this polytope.}
  \label{fig:polytope}
\end{figure}

The geometric properties of topic modeling for finite, discrete random variables has been explored in depth in related works (\citep{huang2016anchor, donoho, chen2021learning}). The observation that columns in $\bQ_{X|D}$ are convex combinations of columns in $\bQ_{X|Y}$ leads to a perspective on identification of the matrix decomposition as identification of the convex polytope in $\mathbb{R}^m$ which encloses all of the columns of $\bQ_{X|D}$ (the corners of which correspond to columns of $\bQ_{X|Y}$ under certain identifiability conditions).

Here, we briefly discuss an interesting but somewhat different application of convex polytope geometry. Instead of a convex polytope in $\mathbb{R}^m$ with corners as columns of $\bQ_{X|Y}$, we concern ourselves with the convex polytope in $\mathbb{R}^r$ with corners as columns in $\bQ_{D|Y}$, which must enclose all values taken by the oracle domain discriminator $f(x)$ for $x \in \mathcal{X}, q(x) > 0$.

\smallskip

Let us assume that Assumptions A.1--A.4 are satisfied.
We recall the oracle domain discriminator $f$ which is defined such that $[f(x)]_d = q(d|X = x)$. Let $x \in \mathcal{X} = \mathbb{R}^p$. Now, since the $r$ values $q(d|X = x)$ for $ d \in \{1,2,...,r \} $ together constitute a categorical distribution, each of these $r$ values lie between $0$ and $1$, and also their sum adds to 1. Therefore the vector $f(x)$ lies on the simplex $\Delta^{r-1}$. We now express $f(x)$ as a convex combination of the $k$ columns of $\bQ_{D|Y}$. We denote these column vectors $\bQ_{D|Y}[\boldsymbol :, y]$ for each $y \in \out = [k]$. Note that each such vector also lies in the $\Delta^{r-1}$ simplex.

As an intermediate step in the proof of Lemma \ref{lemma:dd_maps_to_unique} given in \appref{sec:lemma-proofs}, we showed that each $f(x)$ is a linear combination of these columns of $\bQ_{D|Y}$ with coefficients $q(y | X = x)$ for all $y \in \out$.
That is, we can rewrite $f(x) = \bQ_{D|Y} \left[ Q(Y = 1 | X = x) \; ... \; Q(Y = k | X = x)\right]^\top$

Since the coefficients in the linear combination are probabilities which, taken together, form a categorical distribution, they lie between $0$ and $1$ and sum to 1. Thus, for all $x \in \calX$ with $q(x) > 0$, $f(x)$ can be expressed as a \emph{convex} combination of the columns of $\bQ_{D|Y}$. Therefore, for any $x$ with $q(x) > 0$, $f(x)$ lies inside the $k-$vertex convex polytope with corners as the columns of $\bQ_{D|Y}$ (which are linearly independent by Lemma \ref{lemma:P_d_y-linearly-independent}). This polytope is embedded in $\Delta^{r-1}$.

Now consider $x$ in an anchor sub-domain, that is $x \in A_y$ for some $y \in \out$. We know that if $q(x) > 0$, $q(y|X = x) = 1$, $q(y'|X = x) = 0$ for all $y' \neq y$ (Lemma \ref{lemma:onehot}).
Since the $q(y|X = x)$ are now one-hot, we have that $f(x) = \bQ_{D|Y}[\boldsymbol :, y]$ for $x \in A_y$. In words, this means that $f(A_y)$ is precisely the $y$th column of $\bQ_{D|Y}$. It follows that the domain discriminator maps each of the $k$ anchor sub-domains exactly to a unique vertex of the polytope. The situation is described in \figref{fig:polytope}.

We could now recover the columns of $\bQ_{D|Y}$, up to permutation, with the following procedure: 
\begin{enumerate}
    \item Push all $x \in \calX$ through $f$. 
    \item Find the minimum volume convex polytope that contains the resulting density of points on the simplex. The vectors that compose the vertices of this polytope are the columns of $\bQ_{D|Y}$, up to permutation.
\end{enumerate}

Note that from Assumption A.4, we are guaranteed to have a region of the input space with at least $\epsilon > 0$ mass that gets mapped to each of the vertices when carrying out step (i). Therefore, our discovered minimum volume polytope must enclose all of these vertices. Since no mass will exist outside of the true polytope, requiring a minimum volume polytope will ensure that the recovered polytope fits the true polytope's vertices precisely (as any extraneous volume outside of the true polytope must be eliminated). Then step (ii) recovers $\bQ_{D|Y}$, up to permutation of columns.
Having recovered $\bQ_{D|Y}$, we can use Lemmas \ref{lemma:solve-domain-agnostic} and \ref{lemma:solve-domain-informed} to recover $q(y|x,d)$.

This procedure is a geometric alternative to the clustering approach outlined in Algorithm $\ref{alg:1}$. In practice, fitting a convex hull around the outputs of a noisy, non-oracle estimated domain discriminator may be computationally expensive, and noise may lead this sensitive procedure \update{to} fail to recover the true vertices.

\section{Ablation Study on Number of Clusters}
\label{sec:ablation}

\update{We conduct an ablation on the choice of $m$, the parameter indicating how many clusters to find in the $q(d|x)$ space. We use the CIFAR-20 dataset with 20 domains and employ DDFA (SI) and SCAN models, following the same hyperparameters as outlined in  \appref{sec:appendix-experimental-details}, except for modifying the choice of $m$ for DDFA (SI). Results are obtained as the average of three random seeds.}

\begin{figure}[ht]
  \centering
  \includegraphics[width=0.95\textwidth]{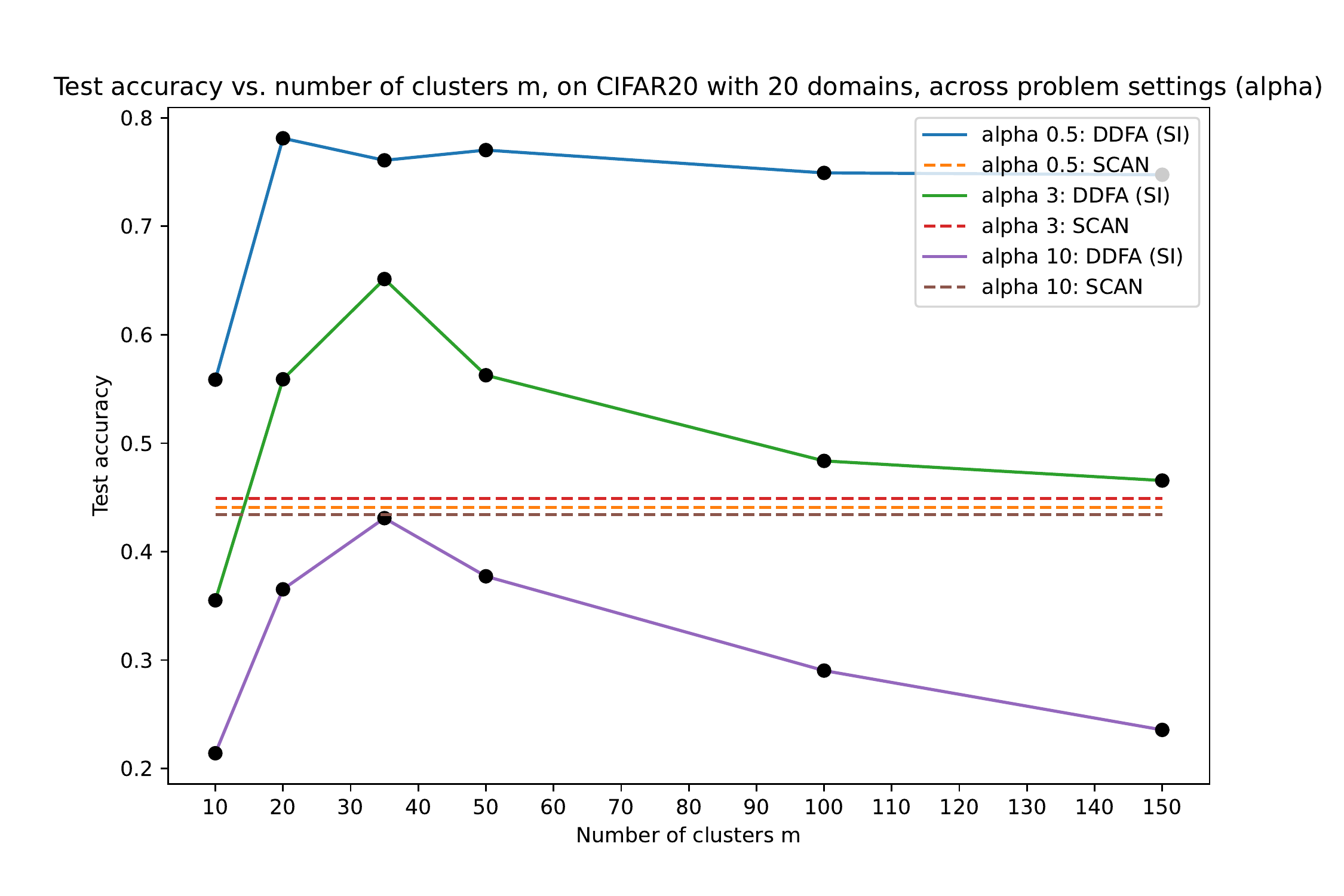}
  \caption{\update{Test accuracy of DDFA (SI) approach and SCAN baseline on CIFAR20 with 20 domains, while modifying the number of clusters $m$ for DDFA (SI). The choice of $\alpha$ roughly modifies the difficulty of the problem, where small $\alpha$ is easier. We note that typically we require choice of $m \ge k$. We portray one datapoint where this constraint is violated, and $m = 10$. Black dots indicate tested values of $m$, and lines are plotted only to show the trend. Larger accuracy is better.}}
  \label{fig:ablation_test_acc}
\end{figure}

\begin{figure}[ht]
  \centering
  \includegraphics[width=0.95\textwidth]{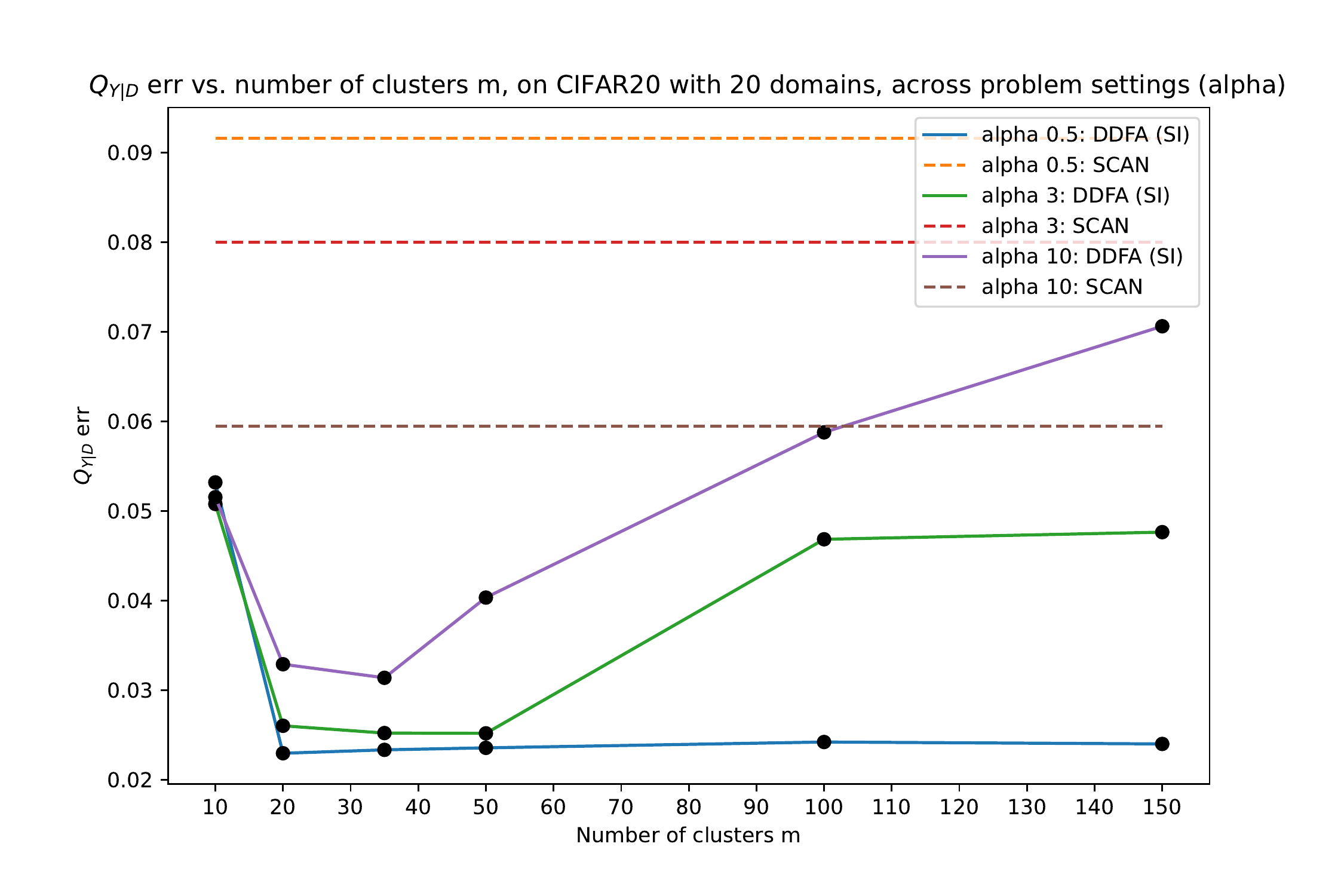}
  \caption{\update{Element-wise average absolute $Q_{Y|D}$ reconstruction error of DDFA (SI) approach and SCAN baseline on CIFAR-20 with 20 domains, while modifying the number of clusters $m$ for DDFA (SI). The choice of $\alpha$ roughly modifies the difficulty of the problem, where small $\alpha$ is easier. We note that typically we require choice of $m \ge k$. We portray one datapoint where this constraint is violated, and $m = 10$. Black dots indicate tested values of $m$, and lines are plotted only to show the trend. Smaller error is better.}}
  \label{fig:ablation_q_y_d_err}
\end{figure}

\update{The number of true classes is 20 in CIFAR-20. As seen in \figref{fig:ablation_test_acc}, when $m$ is chosen to be 10, violating the typical constraint that $m \ge k$, we can still solve for the solution, but we get poor performance, seeing a drop in accuracy as much as 20\% from a better-chosen value of $m$. Choosing $m$ directly equal to or slightly larger than $k$ provide the best performance, with a slope-off in performance at very large $m$.}

\update{The trend is roughly mirrored in \figref{fig:ablation_q_y_d_err}, which shows how the reconstruction error changes over the same variation in $m$. Under all settings, using $m = 10 < k$ clusters provides a poor reconstruction, while the best reconstruction is found with $m$ roughly equal to $k$ or slightly larger. Performance degrades as $m$ grows very large, although the effect is very slight for the alpha = 0.5 setting.}

\update{Intuitively, these results show that breaking the $m \ge k$ condition not only violates the theoretical identifiability, but also leads to poorer empirical performance; choosing very large $m$ can also lead to degraded performance, likely due to propagation of inaccuracies in the finite-sample estimation of the $\hat{Q}_{c(X)|D}$ matrix}.

\newpage

\section{Ablation Study with a Naïve Feature Space}
\label{sec:ablation-naive}
\update{One might ask whether the semantic meaning of the domain discrimination space is necessary in DDFA; might we exchange the domain discrimination step in Algorithm \ref{alg:1} for a naive step in which we simply pass the input through an arbitrary feature extractor and then proceed to clustering in this space?}

\update{The first remark we make is that the domain discriminator does not purely provide a clustering representation; its semantic meaning is also important for the computation of the final domain-adjusted $p_d(y|x)$ prediction, as a reliable estimate of $q(d|x)$, in conjunction with the estimate of the $Q_{Y|D}$ matrix, allow us to estimate $p_d(y|x)$ via Algorithm \ref{alg:2}. Without this semantic meaning, our class prediction can be based only on a coarse prediction at the level of the \textit{cluster}, not the individual datapoint.}

\update{However, if we are still determined to use an alternate representation, it is indeed possible to do so. We illustrate a variant of DDFA using a naïve representation in Algorithms \ref{alg:1-naive} and \ref{alg:2-naive}, and then evaluate this procedure on CIFAR-20 as an ablative study on DDFA. We compare naïve results with standard DDFA results, and with a traditional SCAN baseline, in \tabref{table:cifar20_ablation_naive}.}

\setlength{\textfloatsep}{12pt}
\begin{algorithm}[t]
\caption{DDFA (Naïve) Training}
\label{alg:1-naive}
\begin{algorithmic}[1]

\INPUT $k \geq 1, r \geq k, \{(x_i,d_i)\}_{i \in [n]} \sim q(x, d), \textrm{ A naive representation function } \mathcal{\phi} \textrm{ from } \mathbb{R}^p \to \mathbb{R}^r $

\STATE $\textrm{Push all } \{x_i\}_{i \in [n]} \textrm{ through } \phi $.

\STATE $\textrm{Train clustering algorithm on the n points } \{\phi(x_i)\}_{i \in [n]}, \textrm{ obtain }m \textrm{ clusters.}$

\STATE $c(x_i) \gets \textrm{Cluster id of } \phi(x_i)$

\STATE $\hat{q}(c(X) = a | D = b) \gets  \frac{\sum_{i \in [n]} \mathbb{I}[c(x_i) = a, \; d_i = b ]}{\sum_{j \in [n]}\mathbb{I}[ d_j = b ]}$

\STATE $ \textrm{Populate } \hat{\bQ}_{c(X)|D} \textrm{ as } [\hat{\bQ}_{c(X)|D}]_{a, b} \gets \hat{q}(c(X) = a | D = b)$

\STATE $\hat{\bQ}_{c(X)|Y} , \hat{\bQ}_{Y|D} \gets \textrm{ NMF }(\hat{\bQ}_{c(X)|D} )$

\OUTPUT {$\hat{\bQ}_{c(X)|Y}, \hat{\bQ}_{Y|D}, \textrm{ clustering discretization function } c$}

\end{algorithmic}

\end{algorithm}

\update{\textbf{Naïve Representation Variant of DDFA {} {} } The only major changes from the original DDFA for Algorithm \ref{alg:1-naive} are the removal of the need to train any $\hat{f}$ domain discriminator, the use of the arbitrary representation space $\phi$ before clustering, and the reliance on the output of the clustering discretization function $c$ as well as $\hat{\bQ}_{c(X)|Y}$ (which are both discarded in the original procedure). We need $c$ and $\hat{\bQ}_{c(X)|Y}$ because in Algorithm \ref{alg:2-naive} we will use them for domain-adjusted class prediction.}

\begin{algorithm}[ht]
\caption{DDFA (Naïve) Prediction}
\label{alg:2-naive} 
\begin{algorithmic}[1]

\INPUT $\hat{\bQ}_{c(X)|Y}, \hat{\bQ}_{Y|D}, \textrm{ clustering discretization function } c, (x', d') \sim q(x,d)$ 

\STATE $\textrm{Pass } x' \textrm{ through } c \textrm{ to get cluster id } c(x')$.

\STATE $\hat{q}(c(x') | Y = y'') \gets [\hat{\bQ}_{c(X)|Y}]_{c(x'),y''}  \textrm{ for all } y''$

\STATE $\hat{q}(y''|D = d') \gets [\hat{\bQ}_{Y|D}]_{y'',d'} \textrm{ for all } y'' $

\STATE $\hat{q}(y|c(X) = c(x'), D = d') \gets \cfrac{\hat{q}(c(x') | Y = y)\hat{q}(y|D = d')  }{  \sum\limits_{y'' \in \out} \hat{q}(c(x') | Y = y'')\hat{q}(y''|D=d')}$

\STATE $y_\textrm{pred} \gets \argmax_{y \in [k]} \hat{q}(y|c(X) = c(x'), D = d')$

\OUTPUT : { $\hat{q}(y|c(X) = c(x'), D = d') = \hat{p}_{d'}(y|c(x'))$, $y_\textrm{pred}$ } 
\end{algorithmic}

\end{algorithm}

\update{Algorithm \ref{alg:2-naive} includes significant changes from the DDFA procedure. Since we do not have the estimate of $q(d|x)$ to use, we cannot directly reason about how different locations in the representation space induced by $\phi$ correspond to different probabilities of class labels. However, because we have $\hat{\bQ}_{c(X)|Y}$ and $\hat{\bQ}_{Y|D}$, two outputs of the NMF decomposition in Algorithm \ref{alg:1-naive}, we can calculate a coarse prediction over labels $y$ for each cluster, and then assign the same prediction to each point in that cluster. To obtain the closed-form for this coarse prediction $\hat{p}_{d} (y | c(x))$ used in Algorithm \ref{alg:2-naive}, we use the following derivation: }

\begin{align*}
    \hat{p}_d(y | c(x)) = \hat{q}(y | d, c(x)) &= \cfrac{\hat{q}(c(x) | y, d)\hat{q}(y, d)}{\hat{q}(d, c(x))} \\
&= \cfrac{\hat{q}(c(x) | y, d)\hat{q}(y, d)}{  \sum\limits_{y'' \in \out} \hat{q}(c(x) | y'', d)\hat{q}(y'', d) }
\end{align*}
By label shift, $q(c(x)|y,d) = q(c(x)|y)$, then
\begin{align*}
\hat{p}_d(y | c(x)) = \hat{q}(y | d, c(x)) &= \cfrac{\hat{q}(c(x) | y)\hat{q}(y,d)}{  \sum\limits_{y'' \in \out} \hat{q}(c(x) | y'')\hat{q}(y'',d)} \\
&= \cfrac{\hat{q}(c(x) | y)\hat{q}(y|d)\hat{q}(d)}{  \sum\limits_{y'' \in \out} \hat{q}(c(x) | y'')\hat{q}(y''|d)\hat{q}(d)} \\
&= \cfrac{\hat{q}(c(x) | y)\hat{q}(y|d)(\nicefrac{1}{r})}{  \sum\limits_{y'' \in \out} \hat{q}(c(x) | y'')\hat{q}(y''|d)(\nicefrac{1}{r})} \\
&= \cfrac{\hat{q}(c(x) | y)\hat{q}(y|d)  }{  \sum\limits_{y'' \in \out} \hat{q}(c(x) | y'')\hat{q}(y''|d)}
\end{align*}

\update{Combining Algorithms \ref{alg:1-naive} and \ref{alg:2-naive} allows us to empirically evaluate the behavior of an ablation on DDFA which does not use any domain discriminator. For a reasonable comparison, we need a meaningful naïve representation space. We use a SCAN pretrain backbone for ResNet-18, and remove the last linear layer in the ResNet-18 backbone in order to expose a 512-dimension representation space. Since clustering in high-dimensional spaces often performs poorly, we also map this 512-dimension representation down to only $r$ (the number of domains) dimensions using two different common dimensionality reduction methods: Independent Component Analysis (ICA) \cite{hyvarinen2000independent} and Principal Component Analysis (PCA) \cite{wold1987principal}. These smaller-dimension clustering problems provide a closer comparison to the dimensionality of the clustering problem in the DDFA (SI) procedure, for which we employ $m$ clusters. Note: we use ICA and PCA implementations from scikit-learn \cite{scikit-learn}.}

\update{SCAN, DDFA (RI), and DDFA (SI) experiment details are the same as explained in \appref{sec:appendix-experimental-details}; however, these are a different set of trials than those given in \secref{sec:exp}.}

\update{In general, we can see that all variants of the Naïve/ablated DDFA procedure perform worse than DDFA (SI) in all problem settings, over both metrics of interest. All variants of the Naïve procedure are also worse than DDFA (RI) in both metrics of interest when $\alpha$ is 0.5 or 3, although they match or outpace it slightly in some of the hardest settings ($\alpha = 10$). It is worth pointing out that this is predominantly due to the fact that DDFA (RI) performs notably poorly at harder settings.}

\update{The SCAN baseline outpaces these Naïve approaches, in terms of classification accuracy, in all problem settings. However, the Naïve approaches always achieve better $Q_{Y|D}$ reconstruction error.}

\begin{table}[ht]
    \caption{\update{\emph{Extended Results on CIFAR-20}. Each entry is produced with the averaged result of 3 different random seeds. With DDFA (RI) we refer to DDFA with randomly initialized backbone.  With DDFA (SI) we refer to DDFA's backbone initialized with SCAN. Note that in DDFA (SI), we do not leverage SCAN for clustering. With Naïve we refer to an ablation in which DDFA's domain discriminator is replaced with the SCAN pretrained backbone, with its final linear layer removed so that its output is a 512-dimension unsupervised representation space. With Naïve (ICA) and Naïve (PCA) we refer to similar ablations in which the activations from the second-to-last layer of SCAN network are mapped to $r$-dimensional space with ICA and PCA respectively. $\alpha$ is the Dirichlet parameter used for generating label marginals in each domain, $\kappa$ is the maximum allowed condition number of the generated $\bQ_{Y|D}$ matrix, $r$ is number of domains.}}
    \label{table:cifar20_ablation_naive}
    \begin{tabular}{llllllll}
        \toprule
        \multirow{2}{*}{r} & \multirow{2}{*}{Approaches}    &\multicolumn{2}{c}{$\alpha: 0.5, \; \kappa: 8$} & \multicolumn{2}{c}{$\alpha: 3, \; \kappa: 12$} & \multicolumn{2}{c}{$\alpha: 10, \; \kappa: 20$} \\
        \cmidrule(lr){3-4} \cmidrule(lr){5-6} \cmidrule(lr){7-8}
         & & Test acc & $\bQ_{Y|D}$ err & Test acc & $\bQ_{Y|D}$ err & Test acc & $\bQ_{Y|D}$ err \\
        \midrule
\multirow{3}{*}{20}	&	SCAN	    &	0.439	&	0.092	&	0.446	&	0.079	&	\textbf{0.434}	&	0.060	\\
	                &	DDFA (RI)	&	0.517	&	0.042	&	0.336	&	0.045	&	0.163	&	0.057	\\
                    &	DDFA (SI)	&	\textbf{0.784}	&	\textbf{0.023}	&	\textbf{0.593}	&	\textbf{0.027}	&	0.390	&	\textbf{0.034}	\\
                    
                    	&	Naïve 	&	0.231	&	0.075	&	0.156	&	0.065	&	0.116	&	0.054	\\
	&	Naïve (ICA)	&	0.225	&	0.073	&	0.137	&	0.070	&	0.105	&	0.056	\\
	&	Naïve (PCA)	&	0.204	&	0.076	&	0.141	&	0.065	&	0.108	&	0.055	\\

\midrule															
\multirow{3}{*}{25}	&	SCAN	    &	0.438	&	0.090	&	0.441	&	0.078	&	0.438	&	0.060	\\
                    &	DDFA (RI)	&	0.489	&	0.049	&	0.292	&	0.049	&	0.075	&	0.081	\\
	                &	DDFA (SI)	&	\textbf{0.837}	&	\textbf{0.020}	&	\textbf{0.669}	&	\textbf{0.025}	&	\textbf{0.487}	&	\textbf{0.030}	\\
	                
	&	Naïve	&	0.224	&	0.078	&	0.165	&	0.064	&	0.112	&	0.055	\\
	&	Naïve (ICA)	&	0.204	&	0.076	&	0.146	&	0.063	&	0.100	&	0.057	\\
	&	Naïve (PCA)	&	0.207	&	0.078	&	0.135	&	0.067	&	0.105	&	0.054	\\

\midrule															
\multirow{3}{*}{30}	&	SCAN	    &	0.432	&	0.094	&	0.457	&	0.077	&	0.431	&	0.059	\\
	                &	DDFA (RI)	&	0.512	&	0.046	&	0.299	&	0.048	&	0.087	&	0.077	\\
	                &	DDFA (SI)	&	\textbf{0.820}	&	\textbf{0.022}	&	\textbf{0.743}	&	\textbf{0.021}	&	\textbf{0.543}	&	\textbf{0.028}	\\

	&	Naïve	&	0.208	&	0.077	&	0.152	&	0.066	&	0.122	&	0.051	\\
	&	Naïve (ICA)	&	0.197	&	0.076	&	0.134	&	0.064	&	0.097	&	0.056	\\
	&	Naïve (PCA)	&	0.200	&	0.078	&	0.148	&	0.065	&	0.114	&	0.051	\\

        \bottomrule
    \end{tabular}
\end{table}

\end{document}